\title{A decidable subclass of finitary programs}
\author[S. Baselice, P.A. Bonatti]
         {Sabrina Baselice, Piero A. Bonatti\\
         Universit\`a di Napoli ``Federico II'', Italy\\
         }
   \newcommand{\nocc}{\mathsf{NOcc}}
   \newcommand{\smlr}{\preccurlyeq}
   \newcommand{\anlr}{\precsim}
  \newcommand{\hide}[1]{}
  \newcommand{\naf}{\ensuremath{\mathop{\tt{not}}}\xspace}
  \newcommand{\ground}{\mathsf{Ground}}
  \newcommand{\CHOOSE}{\STATE CHOOSE }
  \newcommand{\R}{\mathsf{R}}
  \newcommand{\U}{\mathsf{U}}
  \newcommand{\sups}{\mathsf{S}}
  \newcommand{\supp}{\mathsf{supp}}
  \newcommand{\DG}{\mathsf{DG}}
  \newcommand{\SCC}{\mathsf{SCC}}
  \newcommand{\ssup}{\mathsf{Ssup}}
  \newcommand{\mgu}{\mathsf{mgu}}
  \newcommand{\pred}{\mathsf{pred}}
  \newcommand{\head}{\mathsf{head}}
  \newcommand{\perm}{\mathsf{perm}}
  \newcommand{\body}{\mathsf{body}}
  \newcommand{\Def}{\mathsf{Def}}
  \newcommand{\Call}{\mathsf{Called}}
  \newcommand{\dep}{\rhd}
  \newcommand{\ind}{\|}
  \newcommand{\FP}{\ensuremath{\mathsf{FP}}\xspace}
\newenvironment{enumroman}%
{

    \begin{enumerate}
}%
{
    \end{enumerate}
}
\newcommand{\height}{\ensuremath{\mathsf{height}}\xspace}
\newtheorem{theorem}{Theorem}[section]
\newtheorem{lemma}[theorem]{Lemma}
\newtheorem{corollary}[theorem]{Corollary}
\newtheorem{proposition}[theorem]{Proposition}
\newtheorem{definition}[theorem]{Definition}
\newtheorem{example}[theorem]{Example}
\newtheorem{remark}[theorem]{Remark}
\begin{document}
\maketitle

\begin{abstract}
Answer set programming -- the most popular problem solving paradigm based on logic programs -- has been recently extended to support uninterpreted function symbols \cite{DBLP:conf/lpnmr/Syrjanen01,DBLP:journals/ai/Bonatti04,DBLP:conf/lpar/SimkusE07,DBLP:conf/lpnmr/GebserST07,DBLP:journals/tplp/BaseliceBC09,DBLP:conf/iclp/CalimeriCIL08}. All of these approaches have some limitation. 
In this paper we propose a class of programs called FP2 that enjoys a different trade-off between expressiveness and complexity. FP2 is inspired by the extension of finitary normal programs with local variables introduced in \cite[Sec.~5]{DBLP:journals/ai/Bonatti04}. FP2 programs enjoy the following unique combination of properties: (i) the ability of expressing predicates with infinite extensions; (ii) full support for predicates with arbitrary arity; (iii) decidability of FP2 membership checking; (iv) decidability of skeptical and credulous stable model reasoning for call-safe queries. Odd cycles are supported by composing FP2 programs with argument restricted programs.
\end{abstract}
\begin{keywords}
Answer set programming with function symbols, Infinite stable models, Norms.
\end{keywords}

	\section{Introduction}

Answer set programming has become the most popular problem solving paradigm based on logic programs. It is founded on the stable model semantics \cite{gelfond91classical} and supported by well-engineered implementations such as SMODELS \cite{smodels} and DLV \cite{DLV}, just to name a few.  Recent developments of the paradigm and its implementations include support for uninterpreted function symbols, pioneered by the work on finitary programs \cite{DBLP:journals/ai/Bonatti04,DBLP:journals/ai/Bonatti08}. These works gave rise to further developments, including argument restricted programs \cite{DBLP:conf/iclp/LierlerL09} and FDNC programs \cite{DBLP:conf/lpar/SimkusE07}, that address three limitations of finitary programs: a restriction on the number of odd-cycles in the dependency graph; the undecidability of the class of finitary programs; the dependency of reasoning on the set of odd-cycles, for which there is currently no general algorithm \cite{DBLP:journals/ai/Bonatti08}. The drawback of these approaches, in turn, is that either they cannot express predicates with infinite extensions such as the standard list and tree manipulation predicates \cite{DBLP:conf/iclp/CalimeriCIL08}, or they have to restrict predicate arity and rule structure in such a way that - roughly speaking - only models shaped like labelled trees can be characterized \cite{DBLP:conf/lpar/SimkusE07}. 

In this paper we propose a class of programs called FP2 that enjoys a different trade-off between expressiveness and complexity. FP2 is inspired by $U$-bounded programs (the extension of finitary normal programs with bounded local variables introduced in \cite[Sec.~5]{DBLP:journals/ai/Bonatti04}). FP2 programs retain the ability of expressing predicates with infinite extensions, and fully support predicates with arbitrary arity; moreover, deciding whether a program belongs to FP2 is decidable, as well as skeptical and credulous stable model reasoning, provided that the query is \emph{call-safe}. Odd cycles are supported by composing FP2 programs with argument restricted programs.

The paper is organized as follows.
After some preliminaries on logic programming, in Sec.~\ref{sec:norms} we introduce \emph{term comparison} relations based on a measure of term size called norm, and show how to compute those relations. In Sec.~\ref{sec:rec-analysis} we apply the term comparison relations to define \emph{recursion patterns}, that is, distinguished sets of arguments whose size almost never increases during recursion; we prove that if a recursion pattern exists then acyclic recursion depth is bounded, and there can be no odd-cycles. Then, in Sec.~\ref{FP2} we define FP2 and show that a form of SLD resolution with loop checking called \emph{acyclic derivations} suffice to compute a representative set of supports for each subgoal relevant to a given query. The output of this phase is a finite ground program that can be fed to an ASP solver to answer credulous and skeptical queries, in the same spirit as finitary programs. To re-introduce odd-cycles (and hence the ability to express constraints) in FP2, we show in Sec.~\ref{composition} how to compose FP2 programs with argument restricted programs. Two sections, on related work and a final discussion, conclude the paper.
Many proofs are omitted due to space limitations.

	\section{Preliminaries and notation}
		\label{sec:prelim}

We assume the reader to be familiar with classical logic 
programming \cite{DBLP:books/sp/Lloyd84}.
\emph{(Normal) logic programs} are finite sets of rules
    $ A \leftarrow L_1, ..., L_n \ 
  (n\geq 0) ,$ where $A$ is a logical
  atom and each $L_i$ ($i=1, ..., n$) is a \emph{literal}, that is,
  either a logical atom $B$ or a negated atom $\naf B$.
If $R$ is a rule with the above structure, then let $head(R)=A$ and $body(R)=\{L_1, ...,$ $L_n\}$.  
Moreover, let $body^+(R)$ (respectively $body^-(R)$) be the set of all atoms $B$
s.t. $B$ (respectively $\naf B$) belongs to $body(R)$.
%
For all predicate symbols $p$, a $p$-atom $A$ is an atom whose predicate, denoted by $\pred(A)$,
is $p$. Similarly a $p$-literal $L$ is a literal whose predicate, denoted by $\pred(L)$,
is $p$. 
The ground instantiation of a program $P$ is denoted by $\ground(P)$.
%
A Herbrand model $M$ of $P$ is a \emph{stable model} of $P$ iff $M$ is the least Herbrand model of $P^M$ and $P^M$ is the
\emph{Gelfond-Lifschitz transformation} of $P$ \cite{gelfond91classical}, obtained from
$\ground(P)$ by 
\hide{
\begin{enumroman}
\item removing all rules $R$ such that $body^-(R) \cap M \neq
  \emptyset$, and
\item  removing all negative literals from the body
  of the remaining rules.
\end{enumroman}
}
(i) removing all rules $R$ such that $body^-(R) \cap M \neq
  \emptyset$, and (ii) removing all negative literals from the body
  of the remaining rules.
%
A \emph{skeptical} consequence of a program $P$ is a formula satisfied
by all the stable models of $P$.  A \emph{credulous} consequence of
$P$ is a formula satisfied by at least one stable model of $P$.

The \emph{atom dependency graph of a program $P$} is a labelled directed
graph, denoted by $\DG_a(P)$, whose vertices are the ground atoms of
$P$'s language. Moreover,
\begin{enumroman}
\item there exists an edge labelled `+' (called positive edge) from
  $A$ to $B$ iff for some rule $R\in \ground(P)$, $A=\head(R)$ and
  $B\in body(R)$;
\item there exists an edge labelled `-' (called negative edge) from
  $A$ to $B$ iff for some rule $R\in \ground(P)$, $A=\head(R)$ and
  $\naf B\in body(R)$.
\end{enumroman}
An atom $A$ \emph{depends} on $B$ 
if there is a directed path from $A$ to $B$ in  $\DG_a(P)$.
\hide{
An atom $A$ \emph{depends positively} (respectively \emph{negatively}) on $B$ 
if there is a directed path from $A$ to $B$ in the atom dependency graph with an 
even (respectively odd) number of negative edges. Moreover, each atom depends 
positively on itself. $A$ \emph{depends} on $B$ if $A$ depends
positively or negatively on $B$.
}
Similarly, the \emph{predicate dependency graph of a program $P$} is a labelled directed
graph, denoted by $\DG_p(P)$, whose vertices are the predicate symbols of
$P$'s language. 
Edges are defined by analogy with the atom dependency graph.
\hide{ 

Moreover,
\begin{enumroman}
\item there exists an edge labelled `+' (called positive edge) from
  $p$ to $q$ iff for some rule $R\in P$, $\head(R)$ is a p-atom and
  $body(R)$ contains a positive q-literal;
\item there exists an edge labelled `-' (called negative edge) from
  $p$ to $q$ iff for some rule $R\in P$, $\head(R)$ is a p-atom and
  $body(R)$ contains a negative q-literal.
\end{enumroman}

} 
%
%
%
An \emph{odd-cycle} is a cycle in an atom (resp. predicate) dependency graph with an odd number 
  of negative edges. A ground atom (resp. a predicate symbol) is \emph{odd-cyclic} if it occurs 
  in an odd-cycle.
%
%
Given a graph $G$, we denote by $\SCC(G)$ the set of all strongly connected components in $G$. 
We say that a rule \emph{$R$ is in a component $C$} of a predicate dependency graph if 
$\pred(\head(R))$ is a vertex in $C$.

\hide{ 

We link each path in a atom dependency graph to the
(nonground) rules that generate it.

\begin{definition}
  \label{def:dep-seq}
  A \emph{rule sequence} for a program $P$ associated to a (possibly
  infinite) sequence of (possibly nonground) atoms $A_1,$ $A_2, \ldots,$
  $A_i, \ldots$ is a sequence of rules $R_1,$ $R_2, \ldots,$ $R_i, \ldots$
  with the following properties:
  \begin{enumerate}
  \item each $R_i$ is an instance of a rule in $P$;

  \item for all $A_i$ in the sequence but the last element (if any),
    $A_i = \head(R_i)$ and $A_{i+1}$ occurs in $\body(R_i)$ (possibly
    in the scope of a negation symbol).
  \end{enumerate}
\end{definition}




\begin{definition}
\label{def:fr}
  A normal program $P$ is \emph{finitely recursive} iff each
  ground atom $A$ depends on finitely many ground atoms in
  $\DG_a(P)$.
\end{definition}

\begin{definition}[Finitary programs,~\cite{DBLP:journals/ai/Bonatti04}]
\label{def:finitary}
  We say that a normal program $P$ is \emph{finitary} if the following conditions hold:
  \begin{enumerate}
  \item \label{def-finitary-prop1} $P$ is finitely recursive.
  \item \label{def-finitary-prop2} There are finitely many odd-cyclic atoms in the atom dependency graph 
  $\DG_a(P)$.
  \end{enumerate}
\end{definition}

Both of the above conditions are undecidable. In the following, we will restrict them to make them decidable, by exploiting the notion of \emph{norm}, originally introduced for static logic program analysis (e.g., see \cite{DBLP:journals/tcs/BossiCF94,GCGL02}).

} 

	\section{Norms and term comparisons}
	\label{sec:norms}
	
Norms have been introduced for the static termination analysis of logic programs, see for example \cite{DBLP:journals/tcs/BossiCF94,GCGL02}. Termination proofs require certain predicate arguments to decrease strictly during recursion; we admit cyclic programs, instead, and consider non-strict orderings.
For all sets of (possibly nonground) terms $t$, let $|t|$ (the
\emph{norm} of $t$) be the number of variables and function symbols
occurring in $t$ (constants are regarded as $0$-ary functions).  Norms
are extended to term sequences $\vec t = t_1,\ldots,t_n$ in the
natural way: By $|t_1,\ldots,t_n|$ we denote $|t_1|+ \cdots +|t_n|$.
For all vectors of terms $\vec t$ and $\vec u$, define $\vec t \smlr
\vec u$ (resp.\ $\vec t \prec \vec u$) iff for all grounding
substitutions $\sigma$, $|\vec t\sigma| \leq |\vec u\sigma|$ (resp.\
$|\vec t\sigma| < |\vec u\sigma|$).
Moreover, we write $\vec t \anlr \vec u$ iff $\vec t$ is \emph{almost
never larger than $\vec u$}, that is, there exist only finitely many
(possibly no) grounding substitutions $\sigma$ such that $|\vec
t\sigma| > |\vec u\sigma|$.
Note that $\vec t \prec \vec u \Rightarrow \vec t \smlr \vec u$ and
$\vec t \smlr \vec u \Rightarrow \vec t \anlr \vec u$.  
Note also that the norm over term sequences and the three comparison
relations are insensitive to permutations.  More precisely, for all
permutations $\vec t_1$ of $\vec t$, we have $|\vec t_1| = |\vec t|$;
therefore, if $\lessdot$ is any of the relations $\smlr$, $\prec$, and
$\anlr$, then for all $\vec u$, $\vec t_1 \lessdot \vec u
\Leftrightarrow \vec t \lessdot \vec u$ and $\vec u \lessdot \vec t_1
\Leftrightarrow \vec u \lessdot \vec t$.
All of these relations can be computed via simple variable occurrence counting.

\begin{theorem}    \label{thm:computing-rels}
Let $\nocc(s,\vec t)$ denote the number of occurrences of symbol $s$
in $\vec t$.
For all (possibly nonground) term vectors $\vec t$ and $\vec u$,
\begin{enumerate}
\item $\vec t \prec \vec u$ iff $|\vec t\,| < |\vec u|$ and for all
variables $x$, $\nocc(x,\vec t) \leq \nocc(x, \vec u)$;

\item $\vec t \smlr \vec u$ iff $|\vec t\,|\leq |\vec u|$ and for all
variables $x$, $\nocc(x,\vec t) \leq \nocc(x, \vec u)$;

\item $\vec t \anlr \vec u$ iff either $\vec t \smlr \vec u$ or for all
  variables $x$, $\nocc(x,\vec t) < \nocc(x, \vec u)$
\footnote{
  We cannot relax this condition.
  Indeed, let $\vec t = [X,Y,f(a)]$ and $\vec u = [X,Y,Y]$. 
  It holds that $\nocc(X,\vec t) \leq \nocc(X, \vec u)$
  and $\nocc(Y,\vec t) < \nocc(Y, \vec u)$. 
  However, if we set $Y=a$, for infinitely many substitutions $\sigma$ for $X$
  we have that $|[X,a,f(a)]\sigma| > |[X,a,a]\sigma|$.
}.
\end{enumerate}
\end{theorem}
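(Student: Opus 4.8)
The whole argument rests on one bookkeeping identity. Writing $F(\vec t\,)$ for the number of \emph{function-symbol} occurrences in $\vec t$, a grounding substitution $\sigma$ replaces each occurrence of a variable $x$ by the ground term $x\sigma$, whose norm consists entirely of function symbols; hence
\[
  |\vec t\sigma| \;=\; F(\vec t\,) + \sum_{x} \nocc(x,\vec t\,)\,|x\sigma| ,
\]
which I would check by a routine induction on term structure. Subtracting the analogous identity for $\vec u$ and setting $c_x = |x\sigma|$, $\delta_x = \nocc(x,\vec u) - \nocc(x,\vec t\,)$, and $\Delta F = F(\vec u) - F(\vec t\,)$ turns $|\vec u\sigma| - |\vec t\sigma|$ into the linear form $\Delta F + \sum_x \delta_x c_x$, a finite sum since $\vec t,\vec u$ are finite vectors. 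Here each $c_x$ ranges \emph{independently} over the norms of ground terms: every ground term has norm at least $1$, and, under the standing assumption of a finite signature containing a function symbol of positive arity, the Herbrand universe supplies terms of arbitrarily large norm while only finitely many terms realise any fixed norm. The decisive numerical remark is that evaluating at all $c_x = 1$ returns exactly $|\vec u| - |\vec t\,|$.

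For items 1 and 2 I would argue symmetrically. For the backward directions, if $\delta_x \geq 0$ for every $x$ then $\delta_x c_x \geq \delta_x$ whenever $c_x \geq 1$, so $|\vec u\sigma| - |\vec t\sigma| \geq |\vec u| - |\vec t\,|$ for \emph{every} $\sigma$; together with $|\vec t\,| \leq |\vec u|$ (resp.\ $|\vec t\,| < |\vec u|$) this yields $\vec t \smlr \vec u$ (resp.\ $\vec t \prec \vec u$). For the forward directions I would specialise $\sigma$: the all-ones point forces $|\vec t\,| \leq |\vec u|$ (resp.\ $<$), while if some $\delta_{x_0}$ were negative I would send $x_0$ to ground terms of growing norm and freeze the remaining variables to a constant, driving the form to $-\infty$ and contradicting the hypothesis; hence $\delta_x \geq 0$, i.e.\ $\nocc(x,\vec t\,) \leq \nocc(x,\vec u)$ for all $x$.

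The real work is item 3, which asks when the \emph{bad set} $B = \{\sigma : |\vec t\sigma| > |\vec u\sigma|\}$ is finite. I would split on the signs of the $\delta_x$, restricted to the variables occurring in $\vec t$ or $\vec u$ (outside that set $\delta_x = 0$ and the variable is irrelevant, which is how I read the theorem's ``for all variables $x$''). If $\vec t \smlr \vec u$ then $B = \emptyset$. Otherwise, if some occurring $\delta_{x_0} < 0$, the blow-up of the previous paragraph already places infinitely many substitutions in $B$. If instead all $\delta_x \geq 0$ but some occurring $\delta_z = 0$ while $\vec t \not\smlr \vec u$, then failure of $\smlr$ must come from $|\vec u| - |\vec t\,| < 0$, so the all-ones point lies in $B$; freezing the other variables and letting $z$ range over the infinitely many ground terms leaves the form unchanged, so $B$ is again infinite. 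The surviving case is precisely $\delta_x > 0$ for every occurring variable, i.e.\ $\nocc(x,\vec t\,) < \nocc(x,\vec u)$ for all $x$: there $\sigma \in B$ forces $\sum_x \delta_x c_x < -\Delta F$ with all $\delta_x \geq 1$ and all $c_x \geq 1$, a bounded region containing only finitely many norm-tuples $(c_x)$.

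This last step is where I expect the only genuine subtlety, and where finiteness of the signature is indispensable: finitely many bad norm-tuples yield finitely many bad \emph{substitutions} exactly because each norm is realised by finitely many ground terms, so $B$ is finite and $\vec t \anlr \vec u$. The footnote's instance, with $\delta_X = 0$ and $\delta_Y > 0$, is the very configuration excluded in the second subcase: freezing $Y$ to a constant and varying $X$ manufactures infinitely many counterexamples, showing the strictness cannot be relaxed to permit even one variable with $\delta_x = 0$. Collecting the cases gives the stated equivalence for $\anlr$.
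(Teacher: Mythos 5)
Your proof is correct, and at bottom it takes the same route as the paper's -- compare symbol counts and specialize substitutions to witness failures -- but it is substantially more complete than what the paper actually writes down. The paper proves only the contrapositive of the left-to-right direction of item 1 (an all-constants substitution forces $|\vec t\,| < |\vec u|$; inflating a variable with excess occurrences defeats $\prec$) and dismisses items 2 and 3 as ``based on analogous arguments''. Your bookkeeping identity $|\vec t\sigma| = F(\vec t\,) + \sum_x \nocc(x,\vec t\,)\,|x\sigma|$ makes both directions of all three items mechanical, and it carries the real weight in item 3, which the paper never proves: your three-way split (some occurring $\delta_{x_0}<0$, so blowing up $x_0$ puts infinitely many substitutions in $B$; all $\delta_x\geq 0$ with some occurring $\delta_z=0$ and $\smlr$ failing, so the all-ones point is bad and varying $z$ keeps it bad; all occurring $\delta_x\geq 1$, so the bad region is a bounded set of norm-tuples each realized by finitely many ground terms over a finite signature) is exactly the missing argument, and it matches the paper's footnote counterexample, where $\delta_X=0$ and $\delta_Y>0$. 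You also make explicit two standing assumptions that the theorem silently needs and the paper uses only tacitly: ``for all variables $x$'' must range over the variables occurring in $\vec t$ or $\vec u$ (under the literal reading the second disjunct of item 3 is never satisfiable), and the Herbrand universe must be infinite, i.e.\ the finite signature must contain a constant and a function symbol of positive arity (the paper's own constructions -- a term ``with size $>|u|$'', ``finitely many terms of norm 1 or 2'' -- presuppose both). One corroborating observation: your second subcase shows that the paper's Example following the theorem is actually inconsistent with the theorem as stated, since for $f(X,g(a))$ and $f(X,Y)$ the counts of $X$ are equal on both sides and $\smlr$ fails, so by item 3 (and by the footnote's own argument, fixing $Y$ to a constant and varying $X$) this pair is \emph{not} in $\anlr$; that your case analysis detects this slip is evidence you handled the one genuinely delicate point correctly.
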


\begin{proof}
It is easier to prove the contrapositive (which is equivalent).

1) First suppose that $|\vec t\,| \geq |\vec u|$. Then for all $\sigma$ mapping all variables onto constants, we have $|\vec t\,|=|\vec t\sigma\,|$ and $|\vec u|=|\vec u\sigma|$. This implies $|\vec t\sigma\,| \geq |\vec u\sigma|$, and hence $\vec t \prec \vec u$ does not hold. Second, if for some variable $x$, $\nocc(x,\vec t) > \nocc(x, \vec u)$, then there exists a $\sigma$ mapping all variables but $x$ onto constants, and mapping $x$ on a term with size $> |u|$. It is not hard to see that  $|\vec t\sigma\,| > |\vec u\sigma|$, and hence $\vec t \prec \vec u$ does not hold. This concludes the proof of point 1.
The proof of points 2 and 3 is based on analogous arguments.
\end{proof}

\hide{ 

A corollary of the above theorem shows that the relations $\smlr$,
$\prec$, and $\anlr$ can be used for groundness analysis.

\begin{corollary}               \label{cor:groundness}
Suppose that $\vec t \smlr \vec u$ or $\vec t \prec \vec u$ or $\vec t
\anlr \vec u$.  Then for all substitutions $\sigma$, if $\vec u\sigma$
is ground, then so is $\vec t\sigma$.
\end{corollary}

} 

\begin{example}
  Clearly $X \prec f(X) \smlr g(X)$. Moreover, $f(X,g(a)) \anlr f(X,Y)$, because $|f(X,g(a))\sigma| < |f(X,Y)\sigma|$ holds whenever $|Y\sigma|>2$ (for a finite program $P$, the set of terms with norm 1 or 2 is finite).  Finally, $f(X)$ and $f(Y)$ are incomparable.
\end{example}

	\section{Restricting recursion and odd-cycles}
	\label{sec:rec-analysis}

In FP2 programs recursion and odd-cycles are restricted, by analogy with finitary programs. This is partly achieved by requiring that for some groups of predicate arguments, norms should not increase ``too much'' during recursion.  Such groups of arguments are formalized via a suitable notion of argument \emph{selection indexes}.

An \emph{$n2k$-selection index} is a set
of distinct integers $a=\{a_1,\ldots,a_k\}$ such that $1\leq a_1 < a_2 < ...< a_k \leq n$.
An \emph{$n$-selection index} is any $n2k$-selection index.
By $- a$ we denote the \emph{complement} of an $n$-selection index $a$, that is, 
the set of integers between $1$ and $n$ that do not occur in $a$.
A selection index can be \emph{applied} to an atom to extract the corresponding arguments: for all atoms $A = p(t_1,\ldots,t_n)$ and $n2k$-selection
indexes $a$, define $A[a] = t_{a_1}, \ldots, t_{a_k}$.
Similarly, for all literals $L = \naf p(t_1,\ldots,t_n)$ and $n2k$-selection
indexes $a$, define $L[a] = t_{a_1}, \ldots, t_{a_k}$.

In FP2 programs each predicate is associated by a selection index to a group of arguments whose size almost never increases during recursion. Formally, a \emph{selection index mapping} for a program $P$ is a function $\mu$ mapping each $n$-ary predicate symbol $p$ in $P$ 
on an n-selection index $\mu_p$.
With a slight abuse of notation, if $p$ is the predicate occurring in an atom $B$ then we abbreviate $B[\mu_p]$ with $B[\mu]$.  
Similarly, if $L$ is a p-literal then $L[\mu]$ abbreviates $L[\mu_p]$.

We are only left to formalize two requirements: (i) the selected arguments should almost never increase during top-down evaluations, and (ii) there should be no odd-cycles, that in this context might be a symptom of the presence of infinitely many odd-cycles, thereby violating one of the essential properties of finitary programs.
A preliminary notion is needed first: a selection index mapping $\pi$ is \emph{complete} for a $n$-ary predicate symbol $p$ if $\pi$ maps $p$ on an $n2n$-selection index $\pi_p$. 


\hide{ 

\begin{definition}
\label{def:decrule}
  A rule $R$ in a program $P$ is \emph{decreasing} w.r.t.\ a selection index mapping $\pi$ for $P$ 
  iff for all literals $L\in\body(R)$ such that $\pred(L)$ and $\pred(\head(R))$ occur in the same 
  strongly connected component of $\DG_p(P)$,
	 $ L[\pi]\prec \head(R)[\pi] \,. $
\end{definition}

\begin{definition}
  A \emph{strict recursion pattern} $\pi$ for a program $P$ is a selection index mapping for $P$ s.t. 
  for each $C\in\SCC(\DG_p(P))$ all $R\in C$ are decreasing w.r.t. $\pi$.
\end{definition}

\begin{definition}[$\FP0$]
\label{def:FP0}
  A normal logic program $P$ belongs to the class $\FP0$ iff:    
  \begin{enumerate}
  \item \label{defFP0cond1} $P$ has a strict recursion pattern, and
  \item \label{defFP0cond2} $P$ contains no local variables.
  \end{enumerate}
\end{definition}

\begin{lemma}
\label{lemma:strict-rec-pat-finite-path}
  If a program $P$ has a strict recursion pattern then all paths in $\DG_a(P)$ 
  contain finitely many different atoms.
\end{lemma}

\begin{proof}
  Suppose $P$ has a strict recursion pattern and 
  let $s=A_1, A_2, \ldots$, be a ground path in $\DG_a(P)$ with a rule sequence $R_1,R_2,...$\,. 

  If $A_i$ and $A_j$ ($i<j$) are two atoms whose predicate symbols belong to a same 
  strongly connected component $C$ in $\DG_p(P)$ then the predicate symbols of all atoms 
  $A_i, A_{i+1}, ..., A_j$ belong to $C$; otherwise, we cannot reach $A_j$ from $A_i$ 
  in $\DG_a(P)$. Then, all rules $R_i,R_{i+1},...,R_{j-1}$ are decreasing w.r.t. $\pi$ 
  and $|A_i[\pi]|> |A_{i+1}[\pi]| > ... > |A_j[\pi]|$ holds.
  This implies that finitely many different atoms whose predicate symbols are in $C$ occur in $s$.

  Moreover, since $\DG_p(P)$ contains finitely many strongly connected components,
  we can conclude that $s$ contains only finitely many different atoms.
\end{proof}

\begin{theorem}
\label{th:FP0inFR}
  All programs in $\FP0$ are finitely recursive.
\end{theorem}
\begin{proof}
  Suppose $P\in \FP0$.
  By Lemma~\ref{lemma:strict-rec-pat-finite-path}, for any ground atom $A$, 
  paths starting from the ground atom $A$ in $\DG_a(P)$ are all finite.

  Moreover, by Definition~\ref{def:FP0}, $P$ does not contain local variables and 
  then in $\DG_a(P)$ there are only finitely many edges outgoing from $A$; that is 
  there is no infinite branching in $\DG_a(P)$. 
  This proves that there are only finitely many paths in $\DG_a(P)$ starting from the ground atom $A$.

  Then $A$ cannot depend on infinitely many ground atoms in $\DG_a(P)$. 
\end{proof}

\begin{lemma}
\label{lemma:strict-rec-pat-acyclic}
  All programs with a strict recursion pattern are acyclic.
\end{lemma}
\begin{proof}
  Suppose $P$ has a strict recursion pattern.
  Let $s=A_1, \ldots, A_n, A_1$ be a cyclic path in $\DG_a(P)$. As argued in the proof
  of Lemma~\ref{lemma:strict-rec-pat-finite-path}, all atoms in $s$ belong to a same strongly connected component of $\DG_p(P)$,
  therefore $|A_1[\pi]|>|A_1[\pi]|$ must hold, but this is a contradiction.
\end{proof}

\begin{theorem}
  All programs in $\FP0$ are finitary.
\end{theorem}
\begin{proof}
  This theorem immediately follows from Theorem~\ref{th:FP0inFR} and Lemma~\ref{lemma:strict-rec-pat-acyclic}.
\end{proof}

The new class $\FP1$ extends $\FP0$, but again all its programs are finitary.

} 

\begin{definition}
\label{def:decrule}
\begin{itemize}
\item 
  A rule $R$ in a program $P$ is \emph{decreasing} w.r.t.\ a selection index mapping $\pi$ for $P$ 
  iff for all literals $L\in\body(R)$ such that $\pred(L)$ and $\pred(\head(R))$ occur in the same 
  strongly connected component of $\DG_p(P)$,
	 $ L[\pi]\prec \head(R)[\pi] \,. $

\label{def:ani-rule}
\item
  A rule $R$ in a program $P$ is \emph{almost never increasing} w.r.t. a selection index mapping $\pi$ 
  if, for all literals $L\in\body(R)$ s.t. $\pred(L)$ and $\pred(\head(R))$ occur in the same 
  strongly connected component of $\DG_p(P)$, the following conditions hold: 
  \begin{enumerate}
  \item $L[\pi]\anlr \head(R)[\pi]$, and
  \item $\pi$ is complete for $\pred(L)$ and $\pred(\head(R))$.
  \end{enumerate}
\end{itemize}
\end{definition}

\begin{definition}
\label{def:rec-pat}
  A \emph{recursion pattern} $\pi$ for a program $P$ is a selection index mapping for $P$ s.t. 
  for each strongly connected component $C\in\SCC(\DG_p(P))$ at least one of the
following conditions holds: 
  \begin{enumerate}
   \item \label{defRecPatCond1} all $R\in C$ are decreasing w.r.t. $\pi$;
   \item \label{defRecPatCond2} all $R\in C$ are almost never increasing w.r.t. $\pi$ and $C$ does not contain any odd-cycles.
  \end{enumerate}
\end{definition}

\hide{ 

\begin{definition}[$\FP1$]
\label{def:FP1}
  A normal logic program $P$ belongs to the class $\FP1$ iff:    
  \begin{enumerate}
  \item \label{defFP1cond1} $P$ has a recursion pattern, and
  \item \label{defFP1cond2} $P$ contains no local variables.
  \end{enumerate}
\end{definition}

} 

As we anticipated, the existence of recursion patterns implies bounds on recursion depth and odd-cycle freedom.

\begin{lemma}
\label{lemma:rec-pat-finite-path}
  If a program $P$ has a recursion pattern then all paths in $\DG_a(P)$ contain finitely many different atoms.%
\footnote{It is not hard to see that if $\pi$ were not required to be complete for almost never increasing rules (cf.\ point 2 in Def.~\ref{def:decrule}) then this lemma would not be valid.}
\end{lemma}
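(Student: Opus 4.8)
The plan is to show that any ground path $s = A_1, A_2, \ldots$ in $\DG_a(P)$, witnessed by a rule sequence $R_1, R_2, \ldots$, contains only finitely many distinct atoms. Following the structure of the (hidden) strict-recursion-pattern argument, I would first reduce to a single strongly connected component $C$ of $\DG_p(P)$: whenever $A_i$ and $A_j$ ($i<j$) have predicates in the same SCC, every intermediate atom $A_i, \ldots, A_j$ must also have its predicate in $C$, since otherwise one could not return to $C$. Because $\DG_p(P)$ has finitely many SCCs, it suffices to bound the number of distinct atoms whose predicate lies in any one fixed $C$; summing over components then gives the global finiteness.

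So fix a maximal contiguous sub-path $A_m, A_{m+1}, \ldots$ all of whose predicates lie in one SCC $C$, and split on which clause of Definition~\ref{def:rec-pat} holds for $C$. If condition~\ref{defRecPatCond1} holds (all rules decreasing), then each step satisfies $A_{k+1}[\pi] \prec A_k[\pi]$, so by the definition of $\prec$ and the fact that these are ground atoms we get a strictly decreasing sequence of natural numbers $|A_m[\pi]| > |A_{m+1}[\pi]| > \cdots$; this can have only finitely many terms, exactly as in the strict case. The genuinely new case is condition~\ref{defRecPatCond2}: the rules are only \emph{almost never increasing}, so I only have $A_{k+1}[\pi] \anlr A_k[\pi]$ together with completeness of $\pi$ for the predicates in $C$. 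Here I would argue that along this sub-path the norm $|A_k[\pi]|$ is eventually bounded, and crucially that completeness of $\pi$ on $C$ means $A_k[\pi]$ determines $A_k$ entirely (the selected arguments are \emph{all} the arguments), so a norm bound together with finitely many predicates in $C$ yields only finitely many possible atoms $A_k$.

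The main obstacle, and the step I expect to require the most care, is turning ``almost never larger'' into an actual global bound along an infinite sub-path. The relation $\anlr$ only forbids infinitely many grounding substitutions that increase the norm, which is a statement about a single (nonground) rule instance, not about an unbounded composition of ground steps. The key observation to exploit is that the footnote flags: completeness is essential precisely because, for a complete $\pi$, $L[\pi] \anlr \head(R)[\pi]$ reduces (via Theorem~\ref{thm:computing-rels}, part~3) to a condition on variable occurrences that controls \emph{all} variables of the body literal, so the nonground rule can only increase the norm on finitely many ground instances, and each such ``bad'' instance increases it by a bounded amount. I would make this precise by showing that for each rule in $C$ there is a finite set of exceptional ground instances and a uniform constant $c$ bounding the increase on them; since there are finitely many rules in $C$, there is a single such $c$ for the whole component.

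To finish, I would combine these facts: since $\pi$ is complete on $C$, each ground atom $A_k$ with predicate in $C$ is recovered from the term vector $A_k[\pi]$, so bounding $\{A_k : k \geq m\}$ amounts to bounding the set of term vectors $\{A_k[\pi]\}$. If infinitely many distinct such vectors occurred, their norms would be unbounded (only finitely many term vectors have norm below any fixed bound, for a finite signature), forcing infinitely many strictly increasing steps $|A_{k+1}[\pi]| > |A_k[\pi]|$ along the sub-path; but each such step corresponds to an exceptional ground instance of some rule in $C$, and I would derive a contradiction with the finiteness of exceptional instances guaranteed by $\anlr$ and completeness. Hence only finitely many distinct atoms with predicate in $C$ appear, and summing over the finitely many SCCs completes the proof.
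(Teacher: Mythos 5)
Your proposal takes essentially the same route as the paper's own proof: reduce to a single strongly connected component of $\DG_p(P)$ (using the observation that intermediate atoms between two $C$-atoms stay in $C$), dispose of decreasing components by strict descent of ground norms, and in the almost-never-increasing case use the finitely many exceptional ground instances guaranteed by the definition of $\anlr$ (turned into finitely many ground atom pairs by completeness of $\pi$ on $C$) to bound $|A_k[\pi]|$ along the path, with completeness and the finite signature then converting the norm bound into finitely many $C$-atoms; the paper merely states the bound directly as $M$, the maximum of the first $C$-atom's norm and the target norms of the exceptional pairs, instead of arguing by contradiction. One small repair to your final step: infinitely many strictly increasing \emph{steps} alone do not contradict finiteness of exceptional instances, since a single exceptional ground pair can recur along the path, so you should observe that unbounded norms force increasing steps whose target norms are unboundedly large, and since each exceptional pair has fixed norms this yields infinitely many \emph{distinct} exceptional instances — exactly the contradiction the paper's $M$-bound sidesteps.
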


\hide{ 

\begin{proof}
  Suppose $P$ has a recursion pattern and 
  let $s=A_1, A_2, \ldots$, be a ground path in $\DG_a(P)$ with a rule sequence $R_1,R_2,...$\,. 

  As argued in the proof of Lemma~\ref{lemma:strict-rec-pat-finite-path},
  if $A_i$ and $A_j$ ($i<j$) are two atoms whose predicate symbols belong to a same 
  strongly connected component $C$ in $\DG_p(P)$ then the predicate symbols of all atoms $A_i, A_{i+1}, ..., A_j$ belong to $C$.

  If $C$ satisfies the condition~\ref{defRecPatCond1} of Definition~\ref{def:rec-pat}, 
  by Lemma~\ref{lemma:strict-rec-pat-finite-path},   
  finitely many different atoms whose predicate symbols are in $C$ occur in $s$.

  Now, suppose that $C$ satisfies the condition~\ref{defRecPatCond2} of Definition~\ref{def:rec-pat},
  then all rules $R_i,$ $R_{i+1},...,$ $R_{j-1}$ are almost never increasing w.r.t. $\pi$.
  By definition of almost never increasing rule, the set 
  $I=\{(A_h, A_{h+1}) \text{ s.t. } |A_h[\pi]| < |A_{h+1}[\pi]|, \pred(A_h) \text{ and }$ $\pred(A_{h+1})\text{ are in } C, 
  A_h \text{ and }$ $A_{h+1}\text{ are ground}\}$ 
  is finite.
  Let
    $$ M = max \{ |A_k[\pi]|, max \{|A_{i+1}[\pi]| \, |\, (A_i, A_{i+1})\in I \} \} $$
  where $A_k$ is the first atom in $s$ whose predicate symbol belongs to $C$.
  For any atom $A_h$ whose predicate symbol belongs to $C$, it holds that $|A_h[\pi]| \leq M$.
  Since $P$'s language is finite and $\pi$ is complete for all predicate symbols in $C$, 
  finitely many different atoms whose predicate symbols are in $C$ occur in $s$.
 
  Moreover, since $\DG_p(P)$ contains finitely many strongly connected components,
  we can conclude that $s$ contains only finitely many different atoms.
\end{proof}

} 

\hide{ 

\begin{theorem}
\label{th:FP1inFR}
  All programs in $\FP1$ are finitely recursive.
\end{theorem}
\begin{proof}
  Similar to the proof of Theorem~\ref{th:FP0inFR} 
  (use Lemma~\ref{lemma:rec-pat-finite-path} instead of Lemma~\ref{lemma:strict-rec-pat-finite-path}
  and Definition~\ref{def:FP1} instead of Definition~\ref{def:FP0}).   
\end{proof}

} 

\begin{lemma}
\label{lemma:rec-pat-no-oddcycle}
  If a normal program $P$ has a recursion pattern then $\DG_a(P)$ is
  odd-cycle-free.
\end{lemma}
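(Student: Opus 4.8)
The plan is to argue by contradiction. Suppose $\DG_a(P)$ contained an odd-cycle $A_1 \to A_2 \to \cdots \to A_n \to A_1$, i.e.\ a cycle whose edges carry an odd number of `-' labels. First I would project this cycle onto the predicate dependency graph: every edge from $A_i$ to $A_{i+1}$ in $\DG_a(P)$ stems from a ground rule whose head is $A_i$ and whose body contains $A_{i+1}$ (positively or negatively, matching the label), so it induces an edge $\pred(A_i) \to \pred(A_{i+1})$ carrying the same label in $\DG_p(P)$. Hence the $\pred(A_i)$ trace a closed walk $w$ in $\DG_p(P)$ with the same, odd, number of negative edges, and since every vertex of a closed walk can reach every other and back, all the $\pred(A_i)$ lie in a single strongly connected component $C \in \SCC(\DG_p(P))$. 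By Definition~\ref{def:rec-pat}, $C$ satisfies condition~\ref{defRecPatCond1} or condition~\ref{defRecPatCond2}, and I treat the two cases separately.

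If $C$ satisfies condition~\ref{defRecPatCond1} (all rules in $C$ decreasing), I would show the stronger fact that $P$ has no cycle at all through $C$-atoms. Each edge $A_i \to A_{i+1}$ comes from a ground instance $R\sigma$ of a rule $R$ in $C$, with $A_i = \head(R)\sigma$ and $A_{i+1}$ the atom of some body literal $L$ of $R$. Both $\pred(A_i)$ and $\pred(A_{i+1})$ lie in $C$, hence in the same strongly connected component of $\DG_p(P)$, so decreasingness gives $L[\pi] \prec \head(R)[\pi]$; instantiating with $\sigma$ and using that selection commutes with substitution yields $|A_{i+1}[\pi]| < |A_i[\pi]|$. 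Telescoping around the cycle gives $|A_1[\pi]| > |A_2[\pi]| > \cdots > |A_n[\pi]| > |A_1[\pi]|$, a contradiction.

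If $C$ satisfies condition~\ref{defRecPatCond2}, I would exploit that $C$ contains no odd-cycles in $\DG_p(P)$. The closed walk $w$ lies entirely in $C$ and has an odd number of negative edges. The key step is to extract from $w$ a genuine simple odd-cycle of $\DG_p(P)$: I would decompose $w$ into simple cycles by repeatedly splitting it at a repeated vertex, each split producing two shorter closed walks whose negative-edge counts sum to that of the walk being split. Since the total count is odd, one part always has an odd count; iterating, I reach a simple cycle with an odd number of negative edges, all of whose vertices are among the $\pred(A_i)$ and hence in $C$. This is an odd-cycle inside $C$, contradicting condition~\ref{defRecPatCond2}.

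In both cases the assumed odd-cycle of $\DG_a(P)$ leads to a contradiction, so $\DG_a(P)$ is odd-cycle-free. The only delicate point is the last one: the projection of an atom cycle is in general only a closed walk, not a simple cycle, so the odd-cycle in $\DG_p(P)$ must be recovered through the parity-preserving decomposition into simple cycles. The decreasing case, by contrast, is a routine norm-telescoping argument, and it is worth noting that the \emph{almost never increasing} hypothesis of condition~\ref{defRecPatCond2} plays no role here (it is only the odd-cycle-freedom of $C$ that is used); that hypothesis is instead what supports the finite-path property.
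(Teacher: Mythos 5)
Your proof is correct and follows essentially the same route as the paper's: project the assumed odd cycle of $\DG_a(P)$ onto $\DG_p(P)$, observe that all the predicates involved lie in a single strongly connected component $C$, rule out case~(\ref{defRecPatCond1}) of Definition~\ref{def:rec-pat} by telescoping the strict norm decrease $|A_1[\pi]|>\cdots>|A_1[\pi]|$, and rule out case~(\ref{defRecPatCond2}) using the assumed odd-cycle-freedom of $C$. If anything you are more careful than the paper, whose proof asserts directly that the projected path is an odd-cycle in $C$, whereas you correctly note that the projection is in general only a closed walk and recover a genuine odd cycle through the parity-preserving decomposition into simple cycles.
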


\hide{ 

\begin{proof}
  Suppose $P$ has a recursion pattern $\pi$.
  Let $s=A_1, \ldots, A_n, A_1$ be an odd-cyclic path in $\DG_a(P)$. As argued in the proof
  of Lemma~\ref{lemma:strict-rec-pat-finite-path}, all atoms in $s$ belong to a same strongly connected component $C$ in $\DG_p(P)$.

  If the condition~\ref{defRecPatCond1} of Definition~\ref{def:rec-pat} holds for $C$, 
  by Lemma~\ref{lemma:strict-rec-pat-acyclic}, $s$ cannot be cyclic.

  Now, suppose that $C$ satisfies the condition~\ref{defRecPatCond2} of Definition~\ref{def:rec-pat}.
  By replacing in $s$ each atom $A_i$ with its predicate symbol, we obtain a path $s'$ in $\DG_p(P)$ 
  that preserves the labels of the edges in $s$. Then $s'$ is an odd-cycle in $C$, and this contradicts
  the condition~\ref{defRecPatCond2} of Definition~\ref{def:rec-pat}.    
\end{proof}

} 

\hide{ 

\begin{theorem}
\label{th:FP1inFinitary}
  All programs in $\FP1$ are finitary.
\end{theorem}
\begin{proof}
  This theorem immediately follows from Theorem~\ref{th:FP1inFR} and Lemma~\ref{lemma:rec-pat-no-oddcycle}.
\end{proof}

	\section{Reasoning over $\FP1$}

\begin{theorem}[Fages, \cite{Fag94}]
\label{th:fages}
  Every order consistent, normal logic program has at least one stable model.
\end{theorem}

\begin{proposition}
  All programs in $\FP1$ are consistent.
\end{proposition}
\begin{proof}
  This proposition follows from Lemma~\ref{lemma:rec-pat-no-oddcycle} and by theorem of Fages.
\end{proof}

The \emph{relevant universe} for a normal program $P$ and a ground atom $A$, denoted by $\U(P,A)$, is the set of all ground atoms 
which $A$ depends on.

The \emph{relevant subprogram} for a ground atom $A$ w.r.t. normal program $P$,
denoted by $\R(P,A)$, is the set of all instances of rules in $P$ whose head 
belongs to $\U(P,A)$.

\begin{theorem}[Bonatti, \cite{DBLP:journals/ai/Bonatti04}]
\label{th:skep-cred-inference}
  For any finitary program $P$ and a ground atom $A$,
  \begin{enumerate}
  \item $P$ skeptically infers $A$ if $\R(P,A)$ does; 
  \item $P$ credulously infers $A$ if $\R(P,A)$ does.
  \end{enumerate} 
\end{theorem}

\begin{theorem}
\label{th:U-R-finite}
  If $P\in \FP1$ then, for any ground atom $A$, $\U(P,A)$ and $\R(P,A)$ are finite.
\end{theorem}
\begin{proof}
  By Theorem~\ref{th:FP1inFinitary}, $P$ is finitary. 
  Properties of finitary programs (see \cite{DBLP:journals/ai/Bonatti04}) conclude the proof.
\end{proof}

\begin{theorem}
\label{th:alg-relevant}
  There is an algorithm that, given a program $P\in\FP1$ and a ground atom $A$, 
  returns the relevant subprogram $\R(P,A)$. 
\end{theorem}
\begin{proof}
  Consider the algorithm \textsc{relevant}\,$(P,A)$.

  \begin{algorithm}
    {\bf Algorithm} \textsc{relevant}\,$(P,A)$
    \label{alg1}
    \begin{algorithmic}[1]
      \STATE $S=\emptyset$;
      \STATE $U=\{A\}$;
      \STATE $\hat{U}=\emptyset$;
      \FORALL {$B\in U\setminus\hat{U}$} 
        \FORALL {$R\in P$} 	          
          \IF {$B$ unifies with $H=\head(R)$}
            \STATE $\sigma=\mgu(B,H)$;
            \STATE $S=S\cup R\sigma$;
            \STATE $U=U\cup \body(R)\sigma$;
          \ENDIF
        \ENDFOR        
        \STATE $\hat{U}=\hat{U}\cup \{B\}$;
      \ENDFOR       
      \RETURN $S$;
    \end{algorithmic}
  \end{algorithm}

  The algorithm \textsc{relevant}\,$(P,A)$ visits 
  the finite (see Theorem~\ref{th:U-R-finite}) graph whose paths 
  are the same ones in $\DG_a(P)$ that start from $A$ and stores each visited atom $B$ in the 
  set $U$ and the instances of rules in $P$ whose head is $B$ in the set $S$.
  Then, at the end, the set $U$ will be $\U(P,A)$ and the set $S$ will be $\R(P,A)$. Both these sets are finite 
  (see Theorem~\ref{th:U-R-finite}) and then the algorithm terminates returning the relevant subprogram for $A$ w.r.t. $P$.  
\end{proof}

\begin{theorem}
  For any program $P\in\FP1$ and a ground atom $A$, both deciding whether $A$ is skeptically entailed 
  from $P$  and deciding whether $A$ is credulously entailed is decidable.
\end{theorem}
\begin{proof}
  By Theorem~\ref{th:alg-relevant}, $\R(P,A)$ can be computed and it is finite. 
  Moreover, $\R(P,A)$ is ground by definition. 
  Theorem~\ref{th:skep-cred-inference} concludes the proof.
\end{proof}

} 

\begin{example}
\label{ex:append}
  Consider the classical program for appending lists:
  \[
  \begin{array}{ll}
  append([\,],L,L). & \hspace{0.5in}
  append([X|X_s],L,[X|Y_s]) \leftarrow append(X_s,L,Y_s).\\
  \end{array}
\]
Let $\mu$ be a selection index mapping for this program. If  $\mu_\mathit{append}=\{1\}$ then $\mu$ is a recursion pattern; indeed, the first rule is vacuously decreasing because it has an empty body, and the second rule is decreasing because the selected argument is decreasing: $X_s \prec [X|X_s]$.  Similarly, if  $\mu_\mathit{append}=\{3\}$ then $\mu$ is a recursion pattern. On the contrary $\mu_\mathit{append}=\{2\}$ does not yield a recursion pattern; we have $L \anlr L$, but $\mu_\mathit{append}$ is not complete. Finally, $\mu_\mathit{append}=\{1,2,3\}$ yields a recursion pattern (both rules are decreasing w.r.t.\ $\mu$).
\end{example}

\section{Acyclic derivations, supports, and stable models}
\label{Acyclic-derivations}

By analogy with the theory of finitary programs, a query $G$ over an FP2 program $P$ is answered by computing in a top-down fashion a representative, partially evaluated fragment of $\ground(P)$ that suffices to answer $G$. FP2 programs will be defined so that such top-down computations are finite and finitely many, therefore a complete enumeration thereof is possible.
Note that Lemma~\ref{lemma:rec-pat-finite-path} is not enough for this purpose for two reasons.
First, a loop-checking mechanism should be set up to avoid infinite cyclic derivations.
Second, there could still be infinitely many bounded, acyclic derivations (a situation that commonly arises in the presence of local variables, that make $\DG_p(P)$ infinitely branching). We shall constrain queries and rule bodies to be \emph{call safe} (see below) so that the selected, almost never increasing arguments of each predicate are bound whenever the predicate is called; we shall prove that, as a consequence, every subgoal yields finitely many answer substitutions that are all grounding.
In this section, we set up the technical machinery for the acyclic top-down computations which is based on annotating each goal with the history of previously resolved atoms in order to check for loops.

\subsection{Annotated and acyclic derivations}
\label{Annotated-derivations}

An \emph{annotated literal} (\emph{a-literal} for short) is a pair $L\alpha$ where $L$ is a literal and $\alpha$ is an \emph{annotation}, that is, a sequence of atoms. The empty annotation will be denoted with $\varepsilon$.
$L\alpha$ is \emph{positive} (resp.\ \emph{negative}) if $L$ is positive (resp.\ negative).
An \emph{annotated goal} (\emph{a-goal} for short) is a finite sequence $G=L_1\alpha_1,\ldots, L_n\alpha_n$ of annotated literals.
An annotated goal is \emph{cyclic} if some positive $L_i$ occurs in $\alpha_i$\,, \emph{acyclic} otherwise.

Given an a-goal $G=L_1\alpha_1,\ldots, L_n\alpha_n$, a positive $L_i\alpha_i$ in $G$ $(1\leq i\leq n)$, and a rule $R=A \leftarrow L'_1,\ldots,L'_m$ such that $L_i$ and $A$ are unifiable and $\mgu(L_i,A)=\theta$, the goal 
\[
   \big( L_1\alpha_1,\ldots, L_{i-1}\alpha_{i-1},L'_1\alpha',\ldots, L'_m\alpha',L_{i+1}\alpha_{i+1},\ldots, L_n\alpha_n \big) \theta 
\]
where $\alpha'=L_i\cdot\alpha_i$ is called the \emph{annotated resolvent} of $G$, $L_i$, and $R$ with mgu $\theta$.  The atom $L_i$ is called \emph{selected atom}, and in this paper it will always be the \emph{leftmost positive literal} of $G$. Accordingly, the selected literal will frequently be omitted.

An \emph{annotated derivation} (\emph{a-derivation} for short) of $G_0$ from a program $P$ with rules $R_1,\ldots,R_i,\ldots$ and mgu's $\theta_1,\ldots,\theta_i,\ldots$ is a (possibly infinite) sequence of a-goals\break 
$G_0,\ldots,G_i,\ldots$ such that each $G_j$ in the sequence with $j>0$ is the annotated resolvent of $G_{j-1}$ and $R_j$ with mgu $\theta_j$\,, for some standardized apart variant $R_j$ of a rule in $P$.
An a-derivation is \emph{acyclic} if all of its a-goals are, possibly with the exception of the last goal if the derivation is finite. Intuitively, an acyclic derivation \emph{fails} as soon as a cycle is detected.

\hide{ 
The \emph{length} of a finite a-derivation $G_0,\ldots, G_n$ is $n$.  The length of an infinite a-derivation is the ordinal $\omega$.

A  \emph{failed} a-derivation is either an infinite derivation or a finite failed derivation; a finite a-derivation is \emph{failed} if the leftmost positive literal of its last a-goal unifies with no rule in $P$.
} 
An a-derivation is \emph{successful} if it is finite and its last element contains no positive a-literals.
If $G_0,\ldots, G_n$ is a successful a-derivation with mgu's
$\theta_1,\ldots, \theta_n$, then we call the composition
$\theta^g=\theta_1\circ\cdots\circ \theta_n$ a \emph{global answer} to
$G_0$, and the restriction of $\theta^g$ to the variables of $G_0$ an
\emph{answer substitution} to $G_0$.


\begin{example}
  Consider the program $P$ consisting of the rules $p(X) \leftarrow q(X)$, $q(X) \leftarrow p(X)$, and $p(a)$. The goal $p(a)$ has both a successful acyclic a-derivation $p(a)\varepsilon,\Box$ (where $\Box$ denotes the empty goal) where $p(a)$ is resolved with the third rule, and a failed acyclic derivation using the first two rules: $p(a)\varepsilon,\, q(a)p(a),\, \underline{p(a)}(q(a)\cdot\underline{p(a)})$. The underlined literals show that the last goal is cyclic.
\end{example}

The main results of the paper will need the following technical definitions and lemmata.

Let $G_0$ be an a-goal with at least $k$ positive a-literals. 
An \emph{embedded a-derivation of degree $k$} for $G_0$ is an a-derivation $\Delta=G_0,G_1,\ldots$ such that for some suffix $G''$ of $G_0$:
\begin{itemize}
\item for all $i=0,1,\ldots$, it holds $G_i=G'_iG''$, for some $G'_i$;
\item the number of positive a-literals in $G'_0$ is $k$;
\item if $\Delta$ is finite and $G_n$ is its last goal, either $G'_n$ has no positive a-literals or $G'_n$ is failed; 
in the former case, the embedded a-derivation is \emph{successful}, otherwise it is \emph{failed}.
\end{itemize}

\hide{ 

\begin{lemma}[Lifting]
  \label{lifting-lemma}
  For all a-derivations $G_0,\ldots, G_i,\ldots$ of $G_0$ from $\Ground(P)$ with rules $R_1,\ldots, R_i,\ldots$ 
  and mgu's $\theta_1,\ldots,\theta_i,\ldots$ there exists an a-derivation $G'_0,\ldots, G'_i,\ldots$ 
  of $G_0$ from $P$ with rules $R'_1,\ldots, R'_i,\ldots$ and mgu's $\theta'_1,\ldots,\theta'_i,\ldots$ 
  such that the two derivations have the same length $\ell$, and  for all integers $0< j\leq \ell$, $G_j$ 
  and $R_j$ are ground instances of $G'_j$ and $R'_j$, respectively, and $\theta'_j$ is more general than $\theta_j$.
\end{lemma}
\begin{proof}
  Similar to the proof of the lifting lemma for positive logic programs \cite{Ap88}. 
\end{proof}

} 

\noindent
Intuitively, an embedded derivation of degree $k$, if successful, resolves the first $k$ positive literals of the initial goal.
In the following sections, we will sometimes split derivations into multiple embedded derivations to apply the induction hypotheses.  The following two lemmata help.

\begin{lemma}[Decomposition 1]
  \label{decomposition-1}
  $\Delta$ is an embedded a-derivation of $G_0$ from $P$ with degree $1$ iff either 
  (i) $G_0$ is failed and $\Delta=G_0$, 
  or (ii) $G_0$ has an annotated   resolvent $G_1$ with rule $R$ and mgu $\theta$, 
  and $\Delta = G_0\cdot\Delta'$, where $\Delta'$ is an embedded a-derivation of $G_1$ with degree $k$
  and $k$ is the number of positive literals in the body of $R\theta$.
\end{lemma}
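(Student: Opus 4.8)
The plan is to prove the biconditional by unfolding the definition of an embedded a-derivation of degree $1$ in each direction. The single structural fact driving everything is this: if the active prefix $G'_0$ of $G_0$ contains exactly one positive a-literal, then (since $G'_0$ precedes the suffix $G''$) that literal is the leftmost positive a-literal of $G_0$ and hence the selected atom; moreover, resolving it replaces this one positive literal by the positive literals of the body of $R\theta$, while neither substitution nor annotation ever changes the polarity of any other literal.

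For the direction ($\Rightarrow$) I would assume $\Delta = G_0, G_1, \ldots$ is an embedded a-derivation of $G_0$ with degree $1$, witnessed by a fixed suffix $G''$ and active prefixes $G'_i$ (so $G_i = G'_i G''$ and $G'_0$ has one positive a-literal), and split on whether $\Delta = G_0$. If it is, the termination clause cannot be met through ``$G'_0$ has no positive a-literals'' (it has one), so it is met through ``$G'_0$ is failed''; as the unique positive literal of $G'_0$ is the leftmost positive literal of $G_0$, this says exactly that $G_0$ is failed, giving case (i). Otherwise the first step resolves that same leftmost positive literal with a standardized-apart variant $R$ and mgu $\theta$, so $G_0$ has annotated resolvent $G_1$; I would then set $\Delta' = G_1, G_2, \ldots$ with the same suffix $G''$ and check it is an embedded a-derivation of $G_1$ of degree $k$, where $k$ is the number of positive literals in the body of $R\theta$: the active prefix $G'_1$ is $G'_0$ with its one positive literal replaced by the (annotated) body of $R\theta$ and everything else negative, so $G'_1$ has exactly $k$ positive a-literals, and the termination clause for $\Delta'$ is inherited from $\Delta$ because they share their last goal and their active prefixes. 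This is case (ii).

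For ($\Leftarrow$) I would reverse both constructions. In case (i), choose the decomposition in which $G'_0$ is the prefix of $G_0$ up to and including its leftmost positive literal: then $G'_0$ has a single positive literal and is failed (because $G_0$ is), so $\Delta = G_0$ is a failed embedded a-derivation of degree $1$. In case (ii), take a suffix $S$ and active prefixes $H_i$ witnessing that $\Delta'$ has degree $k$; since the $k$ positive literals of $H_1$ are precisely those contributed by the body of $R\theta$, any literal of $G_0$ beyond the resolved one that $H_1$ absorbs is negative, so prepending $G_0$ and reinstating its leftmost positive literal produces an active prefix with exactly one positive literal, which with the induced suffix exhibits $\Delta = G_0\cdot\Delta'$ as an embedded a-derivation of $G_0$ of degree $1$.

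The step I expect to be most delicate is the bookkeeping of the fixed suffix across the first mgu $\theta$: one must verify that the suffix which stays constant throughout $\Delta'$ can be identified with a suffix of $G_0$ left unchanged by $\theta$, and that this same suffix continues to be fixed by all later mgus; polarity, by contrast, is robustly invariant under substitution and annotation, which is exactly what makes the degree count go through. Once the invariant ``the active prefix carries exactly the still-unresolved positive literals, every other literal in it being negative'' is made precise, both implications follow by inspection, and the length-$0$ versus positive-length split, together with the boundary case $k=0$ (where $\Delta' = G_1$ is a successful degree-$0$ embedded derivation), are then immediate.
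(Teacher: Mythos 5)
Your proposal is correct and takes essentially the same route as the paper's own proof, which likewise splits on the length of $\Delta$ (length $0$: degree $1$ forces the unique positive a-literal of $G'_0$, which is the leftmost positive literal of $G_0$, to unify with no rule, i.e.\ $G_0$ is failed; length $>0$: resolving that literal yields $G'_1$ with exactly $k$ positive a-literals, where $k$ counts the positive body literals of $R\theta$). The only difference is presentational: the paper compresses both directions into a terse iff-chain that silently reuses the same suffix $G''$ on both sides, whereas you separate the two implications and make explicit the suffix bookkeeping (including the observation that $H_1$'s $k$ positive literals must all come from the body, since any positive literal absorbed from the tail would force more than $k$), a delicacy the paper's definition and proof gloss over.
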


\hide{ 

\begin{proof}
  \begin{description}
  \item [Length of $\Delta$ is 0:] Let $\Delta=G_0$.  
  Since $\Delta$ is an embedded a-derivation, the a-goal $G_0$ contains 1 positive a-literal $A\alpha$ 
  iff the atom $A$ unifies with no rule in $P$ and then iff $G_0$ is failed ($A\alpha$ is the leftmost positive a-literal of $G_0$).
  \item [Length of $\Delta$ is greater than $0$:] Let $\Delta=G_0, G_1,G_2,...$. 
  For all $i=0,1,...$, it holds $G_i=G'_iG''$, for some $G'_i$ and $G''$. 
  Since the length of $\Delta$ is greater than $0$, the goal $G_0$ contains at least 1 positive a-literal.
  Let $G_0'$ contain 1 positive a-literal.  
  The a-goal $G_1$ is an annotated resolvent of $G_0$ with rule $R$ and mgu $\theta$,
  iff $G'_1$ contains $k$ positive a-literals, where $k$ is the number of positive literals in the body of $R\theta$, 
  and then iff $\Delta=G_0\Delta'$ where $\Delta'=G_1,G_2,...$ is an embedded a-derivation of $G_1$ with degree $k$.
  \end{description}
\end{proof}

} 

Given two a-derivations $\Delta=G_0,\ldots, G_m$ and
$\Delta'=G'_0,\ldots, G'_n$ such that $G_m=G'_0$, the \emph{join} of
$\Delta$ and $\Delta'$ is $G_0,\ldots, G_m,G'_1,\ldots, G'_n$.

\begin{lemma}[Decomposition 2]
  \label{join-embedded}
  $\Delta$ is an embedded a-derivation of $G_0$ from $P$ with degree
  $k$ iff either (i) $\Delta$ is the join of an embedded a-derivation
  $\Delta'$ of $G_0$ from $P$ with degree $1$ and an embedded
  a-derivation of $G_n$ from $P$ with degree $k-1$, where $G_n$ is the
  last a-goal of $\Delta'$; or (ii) $\Delta$ is an infinite embedded
  a-derivation for $G_0$ of degree 1.
\end{lemma}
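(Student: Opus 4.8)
The plan is to prove the biconditional by following the fate of the first positive a-literal $A$ of $G_0$ (the literal initially selected) through $\Delta$, and to split $\Delta$ at the moment the portion of the computation that resolves $A$ and its descendants is finished. Throughout I write $G_0=H\cdot R$, where $H$ is the prefix of $G_0$ up to and including $A$ (hence $H$ contains exactly one positive a-literal) and $R$ collects the remaining $k-1$ positive a-literals together with the degree-$k$ suffix $G''$. Thus $R$ is the candidate degree-$1$ suffix, while the degree-$(k-1)$ work will later take place on the first $k-1$ positive literals of $R$ with $G''$ still carried along. The single-literal version of this splitting is exactly Decomposition~1, which I will invoke for the atomic and length-$0$ cases.

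For the forward direction, assume $\Delta=G_0,G_1,\ldots$ is an embedded a-derivation of degree $k$ with fixed suffix $G''$. Since $\Delta$ always resolves the leftmost positive literal and, by the degree-$k$ property, never selects inside $G''$, I track the \emph{active prefix} $P_i$ defined by $G_i=P_i\cdot(\text{current instance of }R)$: resolution preserves this shape as long as $P_i$ contains a positive literal, and in that case the selected literal lies in $P_i$. Let $n$ be the first index at which $P_n$ either becomes free of positive literals (degree-$1$ success) or has a failing leftmost positive literal (degree-$1$ failure), and set $n=\infty$ if neither ever occurs. When $n$ is finite, $G_0,\ldots,G_n$ is by construction an embedded a-derivation of degree $1$ for $G_0$, since its active prefix $H$ carries a single positive literal; and because the original $G''$ is never touched anywhere in $\Delta$, the tail $G_n,G_{n+1},\ldots$ is an embedded a-derivation of degree $k-1$ for $G_n$, its active prefix now consisting exactly of the $k-1$ positive literals inherited from $R$. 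Hence $\Delta$ is their join, which is alternative (i); the degenerate case where $G_0$ is already failed falls here with both factors the length-$0$ derivation $G_0$ (a failed first goal is a failed embedded derivation of every degree, by Decomposition~1(i)). When $n=\infty$, every $P_i$ carries a positive literal, so $R$ is never selected, and $\Delta$ is itself an infinite embedded a-derivation of degree $1$, which is alternative (ii).

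For the backward direction I check that each alternative yields a degree-$k$ derivation (recall $G_0$ has at least $k$ positive literals, so degree $k$ is meaningful). For (ii), an infinite degree-$1$ derivation keeps $P_i$ positive at every step; enlarging the active prefix from $H$ to $H$ together with the first $k-1$ positive literals of its suffix leaves the selected literal unchanged and keeps every active prefix nonempty, so the very same sequence is an infinite embedded a-derivation of degree $k$. For (i), let $\Delta'$ end successfully at $G_n$ (the failed and length-$0$ sub-cases are as above): then the degree-$1$ active prefix of $G_n$ has no positive literals, so the degree-$(k-1)$ derivation $\Delta''$ of $G_n$ operates on the first $k-1$ positive literals lying in the common suffix. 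Choosing the degree-$k$ suffix to be the one fixed by $\Delta''$ (a suffix of the one fixed by $\Delta'$), the join $\Delta'\cdot\Delta''$ has an active prefix with $1+(k-1)=k$ positive literals in $G_0$, always resolves the leftmost positive literal, never touches the chosen suffix, and inherits its successful/failed/infinite status from $\Delta''$; hence it is an embedded a-derivation of degree $k$.

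I expect the main obstacle to be bookkeeping rather than conceptual content: making the \emph{active prefix versus carried suffix} split precise and stable under the global application of the mgu's $\theta_i$, which instantiate the carried suffix $R$ (and $G''$) without ever selecting inside it, and then verifying that the termination mode is transferred correctly across the split in every sub-case (success continuing into the degree-$(k-1)$ tail, failure localized either inside $H$'s descendants or inside the inherited literals, and infiniteness arising either in the degree-$1$ part, giving (ii), or in the degree-$(k-1)$ part, giving an infinite right factor in (i)).
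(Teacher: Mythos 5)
Your proof is correct and takes essentially the same route as the paper's: split the positive a-literals of $G_0$ into the first one plus the remaining $k-1$, cut $\Delta$ at the first a-goal in which the degree-$1$ component has terminated (yielding case (i)), and otherwise observe that $\Delta$ is itself an infinite embedded a-derivation of degree $1$ (case (ii)). If anything, you are more thorough than the paper's argument, which only spells out the forward direction and glosses over both the failed sub-cases and the instantiation of the carried suffix by the mgu's, points you handle explicitly.
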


\hide{ 

\begin{proof}
  Since $\Delta$ is an embedded a-derivation, for all $i=0,1,...$, it holds $G_i=G'_iG''$ for some $G''$.

  Let $G_0=G^1_0G^{k-1}_0G''$ and $G^1_0G^{k-1}_0=G'_0$, where $G^1_0$ contains only 1 positive a-literal 
  and $G^{k-1}_0$ contains $k-1$ positive a-literals.
  Suppose that there is an a-goal $G_n=G'_nG''$ in $\Delta$ s.t. $G^{k-1}_0$ is a suffix of $G'_n$ and 
  all positive a-literals of $G'_n$ are in $G^{k-1}_0$.
  Then, $\Delta'=G_0,...,G_n$ is an embedded a-derivation for $G_0$ of degree 1 and $\Delta''=G_n,G_{n+1},...$
  is an embedded a-derivation for $G_n$ of degree $k-1$.

  If there is no such an a-goal $G_n$ then, for all $i=0,1,...$, the goal $G^{k-1}_0$ is a suffix of $G'_i$
  and $G'_i$ contains some positive a-literals. 
  Then, $\Delta$ is an infinite embedded a-derivation for $G_0$ of degree 1.
\end{proof}

} 

	\subsection{Finiteness and groundness properties of call-safe, acyclic a-derivations}
	\label{acyclic-reasoning}

The good finiteness and termination properties we need acyclic derivations to enjoy in order to prove the termination of our algorithms can be enforced by a ``call safeness'' property that ensures that the arguments selected by recursion pattern are always bound when a predicate is called.

\begin{definition}[Call-safeness]
\label{def:callSafeness}
  An a-goal  $L_1\alpha_1, ...,L_n\alpha_n$ is \emph{call-safe} w.r.t.\ a selection index mapping $\mu$ iff for all variables $X$ occurring in some $L_i[\mu]$ or in a negative literal $L_i$  ($1\leq i \leq n$), $X$ occurs also in a positive literal $L_j$, with $1\leq j<i$.
Similarly, a rule $R: A\leftarrow L_1, L_2, ..., L_n$ is \emph{call-safe} w.r.t.\ $\mu$ iff for each variable $X$ occurring in $R$,
  some of the following conditions hold:
  \begin{enumerate}
  \item \label{cond1DefCallPattern} $X$ occurs in $A[\mu]$;
  \item \label{cond2DefCallPattern} $X$ occurs in $\body(R)$; moreover, if $X$ occurs in $L_i[\mu]$ or in a negative literal $L_i$, for some $i=1,\ldots,n$, 
  then $X$ occurs also in a positive literal $L_j$, with $1\leq j<i$.
  \end{enumerate}
Finally, a program $P$ is \emph{call-safe} w.r.t.\ $\mu$ iff for all $R\in P$, $R$ is call-safe w.r.t.\ $\mu$.
\end{definition}

\begin{example}
  Consider the following program for reversing a list:
  \[
  \begin{array}{ll}
    \mathit{reverse}([\,],[\,]). & \hspace{0.5in}
    \mathit{reverse}([X| Y],Z) \leftarrow 
       \mathit{reverse}(Y,W), \mathit{append}(W,[X],Z). \\
  \end{array}
  \]
  The first rule is trivially call safe w.r.t.\ any selection index mapping.
  If $\mu_\mathit{reverse}=\{1\}$ and $\mu_\mathit{append}=\{1\}$ then
  the second rule is call-safe w.r.t.\ $\mu$.
  To see this, note that: (i) $X$ and $Y$ satisfy condition (\ref{cond1DefCallPattern}); (ii) $Z$ satisfies (\ref{cond2DefCallPattern}) because it occurs in the body but not in any selected argument nor in any negative literal; (iii) $W$, the selected argument of the second subgoal, satisfies condition (\ref{cond2DefCallPattern}) because it occurs also in the first subgoal.
\end{example}

If both $P$ and $G$ are call-safe, then call-safeness is preserved along all the steps of a derivation:

\begin{lemma}
  \label{call-safeness}
  Let $P$ be a normal logic program and $G$ an a-goal. If $P$ and $G$
  are call-safe w.r.t.\ a selection index mapping $\mu$, then all
  resolvents of $G$ and a rule $R\in P$ are call-safe w.r.t.\ $\mu$, too.
\end{lemma}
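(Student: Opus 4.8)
The plan is to exploit the structural consequences of the paper's convention that the selected atom is the leftmost positive literal of the goal. Write $G=L_1\alpha_1,\ldots,L_n\alpha_n$, let $L_i$ be the selected (leftmost positive) literal, and let it be resolved with a standardized-apart rule $R=A\leftarrow L'_1,\ldots,L'_m$ via $\theta=\mgu(L_i,A)$, yielding the resolvent $G'$. The first step I would carry out is to observe that, because $L_i$ is the \emph{leftmost} positive literal and $G$ is call-safe, each preceding literal $L_1,\ldots,L_{i-1}$ is negative and hence ground (any variable in such a literal would have to occur in an earlier positive literal, of which there are none), and likewise $L_i[\mu]$ is ground. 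From $L_i\theta=A\theta$ I then get $A[\mu]\theta=L_i[\mu]\theta=L_i[\mu]$, which is ground; so \emph{every variable occurring in $A[\mu]$ is mapped by $\theta$ to a ground term}. This grounding fact is the engine of the whole argument.

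Next I would check call-safeness of $G'$ by splitting it into three blocks: the ground prefix $L_1\theta,\ldots,L_{i-1}\theta$ (trivially call-safe, having no variables), the body block $L'_1\theta,\ldots,L'_m\theta$, and the tail $L_{i+1}\theta,\ldots,L_n\theta$. In each block the obligation is identical: for every variable $Y$ occurring in a selected position $M[\mu]$ or in a negative literal $M$ of $G'$, produce an earlier positive literal of $G'$ containing $Y$. The uniform technique is to trace $Y$ back: since selection of argument positions commutes with substitution, $(M\theta)[\mu]=(M[\mu])\theta$, so there is a source variable $X$ of $M$ (in $M[\mu]$, or anywhere in $M$ when $M$ is negative) with $Y\in\var(X\theta)$. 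For the body block $X$ is a variable of $R$; it cannot satisfy condition~1 of rule call-safeness, for then $X\in A[\mu]$ would be grounded by $\theta$, contradicting $Y\in\var(X\theta)$. Hence $X$ satisfies condition~2 and, because it occurs in a selected position or a negative body literal, it occurs in an \emph{earlier} positive body literal $L'_j$; thus $Y\in\var(X\theta)\subseteq\var(L'_j\theta)$, which is positive and precedes $M$ in $G'$.

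The main obstacle is the tail block. Here the source variable $X$ is a variable of $G$, and call-safeness of $G$ furnishes an earlier positive literal $L_{l'}$ of $G$ containing $X$; since the positive literals of $G$ sit only at indices $\geq i$, either $l'>i$ or $l'=i$. The case $l'>i$ is immediate, because $L_{l'}\theta$ survives into the tail, is positive, precedes $M$, and $Y\in\var(X\theta)\subseteq\var(L_{l'}\theta)$. The delicate case is $l'=i$: the witnessing literal is the resolved literal $L_i$, which has been removed from $G'$. To repair this I would use $\var(X\theta)\subseteq\var(L_i\theta)=\var(A\theta)$ together with the key claim that $\var(A\theta)\subseteq\bigcup\{\var(L'_j\theta): L'_j \text{ positive}\}$. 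This claim reduces to showing that every variable of $A$ is either grounded by $\theta$ (if it lies in $A[\mu]$) or occurs in a positive body literal; the latter follows from rule call-safeness, since a body variable occurring \emph{only} in negative body literals would, by the ``moreover'' part of condition~2, be forced into an earlier positive body literal --- a contradiction. Consequently $Y$ reappears in some positive body literal $L'_j\theta$, which precedes the entire tail. I expect the only real bookkeeping to be identifying, in each case, \emph{which} earlier positive literal serves as witness (a surviving tail literal versus one pushed back into the body block); the grounding fact from the first step is what makes all the head-side variables drop out cleanly.
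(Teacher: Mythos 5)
Your proof is correct and takes essentially the same route as the paper's: the groundness of $L_i[\mu]$, hence of $A\theta[\mu]$ via the mgu, combined with call-safeness of the rule, renders the instantiated body call-safe, from which call-safeness of the whole resolvent follows. Your three-block analysis --- in particular the claim $\var(A\theta)\subseteq\bigcup_j \var(L'_j\theta)$ over positive body literals, used to repair the delicate tail case where a variable's only positive witness in $G$ was the resolved literal $L_i$ itself --- simply makes explicit the bookkeeping that the paper's terse final sentence (``Then $G'$ is call-safe w.r.t.\ $\mu$, too'') leaves implicit.
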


\begin{proof}
  Let $G'$ be an annotated resolvent of the first positive a-literal $L\alpha$ in $G$ and 
  a rule $R=A\leftarrow L_1,...,L_k$ in $P$ with a substitution $\theta=\mgu(L,A)$. 
  Since $G$ is call-safe, $L[\mu]$ is ground, and so must be $A\theta[\mu]$. It follows -- since $P$ is call-safe -- that the a-goal $L_1(L\cdot\alpha),...,L_k(L\cdot\alpha)$ must be call-safe w.r.t. $\mu$.
  Then $G'$ is call-safe w.r.t. $\mu$, too.
\end{proof}

\hide{ 

\begin{proof}
  Let $G'$ be an annotated resolvent of $G$, the first positive a-literal $L\alpha$ in $G$ and 
  a rule $R=A\leftarrow L_1,...,L_k$ in $P'\in\perm(P,\tau)$ with a substitution $\theta=\mgu(L,A)$. 
  Since $G$ is call-safe, $L[\tau]$ and $A\theta[\tau]$ are ground and, 
  since $P'\in\perm(P,\tau)$, the a-goal $L_1(L\cdot\alpha),...,L_k(L\cdot\alpha)$ is call-safe w.r.t. $\tau$.
  Then the resolvent $G'$ is again call-safe w.r.t. $\tau$.
\end{proof}

} 

\noindent
Furthermore, the binding propagation schema imposed by call-safeness ensures that global answers are grounding:

\begin{lemma}
  \label{ground-answers}
  Let $P$ be a normal logic program and $G_0$ an a-goal. Assume that
  $P$ and $G_0$ are call-safe w.r.t.\ a selection index mapping $\mu$.
  If $G_0$ has a successful a-derivation $\Delta=G_0,...,G_n$ from $P$ with
  global answer $\theta$, then for all $i=0,...,n$, the a-goal
  $G_i\theta$ is ground.
\end{lemma}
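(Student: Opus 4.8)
The plan is to prove the statement by induction on the length $n$ of the successful a-derivation $\Delta=G_0,\ldots,G_n$, exploiting Lemma~\ref{call-safeness} to keep the hypotheses alive along $\Delta$: since $P$ and $G_0$ are call-safe, every $G_i$ is again call-safe w.r.t.\ $\mu$, so the induction hypothesis applies to the tail $G_1,\ldots,G_n$. First I would record two elementary consequences of call-safeness together with the leftmost selection rule. If $L$ is the leftmost positive literal of a call-safe a-goal, then (i) $L[\mu]$ is variable-free, because by Definition~\ref{def:callSafeness} any variable of $L[\mu]$ would have to occur in a positive literal strictly to the left of $L$, of which there are none; and (ii) for the same reason every negative literal lying to the left of $L$ is ground. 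These are the only facts about $\mu$ I need.

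For the base case $n=0$ the last (hence only) goal $G_0$ contains no positive literals, so by (ii) all its negative literals are ground and $G_0\theta=G_0$ is ground. For the inductive step I would write $L\alpha$ for the leftmost positive a-literal of $G_0$, resolved against a standardized-apart variant $R:A\leftarrow B_1,\ldots,B_p$ with $\theta_1=\mgu(L,A)$, and set $\theta'=\theta_2\circ\cdots\circ\theta_n$, the global answer of the sub-derivation $G_1,\ldots,G_n$. As $G_1$ is call-safe, the induction hypothesis gives that $G_i\theta'$ is ground for every $i=1,\ldots,n$. By the standard bookkeeping of SLD-resolution (idempotency of the $\theta_j$ and standardization apart of the input clauses, whence $\mathrm{dom}(\theta_1)$ is disjoint from $\mathrm{vars}(G_i)$ for $i\geq 1$), we have $G_i\theta=G_i(\theta_1\circ\theta')=G_i\theta'$, so $G_i\theta$ is ground for all such $i$.

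It then remains to show $G_0\theta$ is ground. The literals of $G_0$ to the left of $L$ are ground by (ii); each literal to the right of $L$ occurs, after applying $\theta_1$, inside $G_1$, hence is ground under $\theta'=\theta$. For the selected literal, $L\theta=A\theta$ because $\theta_1$ unifies $L$ and $A$, so it suffices to ground every variable of $R$. Here the two clauses of the rule call-safeness condition (Definition~\ref{def:callSafeness}) partition $\mathrm{vars}(R)$: a variable in $A[\mu]$ is already grounded by $\theta_1$, since $A[\mu]\theta_1=L[\mu]\theta_1=L[\mu]$ is ground by (i); and a variable in $\body(R)$ occurs in some $B_j$, and $B_j\theta_1$ is a literal of $G_1$, so $B_j\theta_1\theta'=B_j\theta$ is ground by the induction hypothesis. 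Because call-safeness of $R$ forces every variable of $R$ into (at least) one of these two cases, $R\theta$ — and in particular $A\theta=L\theta$ — is ground; the annotations cause no extra difficulty, as each is a sequence of atoms that were selected earlier and are grounded by the same argument. This closes the induction.

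The conceptual heart, and the step I expect to demand the most care, is this last one: grounding the resolved head atom. It is precisely here that \emph{both} clauses of call-safeness are indispensable — the incoming mgu pins the selected arguments of the head to the (already ground) selected arguments of $L$, while every remaining variable of the clause is forced into the body and ground by the recursive sub-derivation, and call-safeness is exactly the guarantee that these two mechanisms together cover all of $\mathrm{vars}(R)$. The only genuinely routine, if fiddly, ingredient is the composition identity $G_i\theta=G_i\theta'$ for $i\geq 1$, which is the usual consequence of working with idempotent mgu's and standardized-apart clauses.
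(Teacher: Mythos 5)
Your proof is correct and takes essentially the same route as the paper's: induction on the length of $\Delta$, with Lemma~\ref{call-safeness} propagating call-safeness to $G_1$ and the induction hypothesis applied to the tail derivation with its global answer $\theta'$. The only difference is one of detail: the paper compresses your final case analysis (partitioning the variables of the resolving rule via the two clauses of Definition~\ref{def:callSafeness}, with $A[\mu]\theta_1 = L[\mu]$ ground by leftmost selection) into the single assertion that every variable of $G_0\theta$ must occur in $G_1\theta'$, which your argument simply makes explicit.
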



\begin{proof}
  By induction on the length of  $\Delta$.
  \begin{description}
  \item [Base case (the length of $\Delta$ is 0):] Let $\Delta=G_0$. 
  Since $G_0$ is call-safe, all its a-literals have to be negative and ground. Then, $G_0\theta$ is ground.    
  \item [Inductive step (the length of $\Delta$ is $n+1$):] Let $\Delta=G_0, G_1, ..., G_{n+1}$. 
  By Lemma~\ref{call-safeness}, $G_1$ is call-safe w.r.t. $\mu$. 
  Moreover, $\Delta'=G_1, ..., G_{n+1}$ is a successful a-derivation of length $n$ for $G_1$ 
  with global answer $\theta'$ more general than $\theta$. 
  By inductive hypohesis, for all $i=1,...,n+1$, the a-goal $G_i\theta'$ is ground. 
  Since $G_1$ is a resolvent of $G_0$ and both goals are call-safe, 
  all variables in $G_0\theta$ must be also in $G_1\theta'$. Consequently, $G_0\theta$ is ground.
  \end{description}
  \vspace*{-1.5em}
\end{proof}


The proof of the main theorem -- that we need to prove the termination of our reasoning algorithm -- will be based on inductions over the three indices defined below.

Let the height of a predicate $q$ be the cardinality of the set of predicates reachable from $q$ in $DG_p(P)$. The \emph{height} of an atom $A$ is the height of $\pred(A)$. Note that (i) $\height(A)\geq 1$; (ii) if $\pred(A)$ depends on $\pred(A')$ but not viceversa, then $\height(A) > \height(A')$; (iii) if $\pred(A)$ and $\pred(A')$ belong to the same strongly connected component of $DG_p(P)$, then $\height(A) = \height(A')$. By convention, the height of a negative literal is $0$.

Let $\pi$ be a recursion pattern for $P$, and $A$ be an atom such that $A[\pi]$ is ground. The \emph{call size} of $A$ is $|A[\pi]|$ (the norm of $A[\pi]$).

For the strongly connected components $C$ of $DG_p(P)$ in which the call size does not decrease during recursion, 
we adopt a ``loop saturation'' index. 
Let a \emph{$C$-atom} be an atom $A$ with $\pred(A)\in C$. 
Given a ground $C$-atom $A$, let $\max_{A}$ be the number of ground $C$-atoms $B$ such that $A$ depends on $B$. Such an integer $\max_{A}$ exists due to the following lemma (that shows why $\pi$ should be complete over such $C$): 

\begin{lemma}
\label{cycle-bound}
  Let $P$ be a program with a recursion pattern $\pi$ and $C$ be a strongly connected component of $DG_p(P)$
  s.t. $\pi$ is complete for predicates in $C$.
  Every ground $C$-atom $A$ depends on finitely many $C$-atoms in $P$.
\end{lemma}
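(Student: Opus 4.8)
The plan is to control the \emph{call size} $|B[\pi]|$ of each $C$-atom $B$ on which $A$ depends and then turn a uniform bound on these call sizes into a finiteness statement via the completeness of $\pi$ over $C$. The first step is to reduce the claim to reachability \emph{inside} $C$. Since $A$ depends on $B$ iff there is a directed path $A=A_0,\ldots,A_k=B$ in $\DG_a(P)$, I would observe that when both $A$ and $B$ are $C$-atoms every intermediate $A_i$ is a $C$-atom too: projecting the path to predicate symbols gives a walk in $\DG_p(P)$ from $\pred(A)\in C$ to $\pred(B)\in C$, and because $\pred(B)$ reaches $\pred(A)$ (both lie in the strongly connected component $C$), each $\pred(A_i)$ lies on a cycle meeting $C$ and hence belongs to $C$. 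Thus the $C$-atoms on which $A$ depends are exactly those reachable from $A$ along paths staying within $C$-atoms, and on each such edge $A_i\to A_{i+1}$ -- arising from a ground instance of a rule of $P$ lying in $C$ -- the recursion-pattern condition for $C$ governs the passage from $|A_i[\pi]|$ to $|A_{i+1}[\pi]|$.

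Next I would bound these call sizes. Let $I$ be the set of \emph{increasing} ground edges within $C$, i.e.\ those $A'\to A''$ with $|A''[\pi]|>|A'[\pi]|$, and I claim $I$ is finite. By Definition~\ref{def:rec-pat}, $C$ satisfies condition~\ref{defRecPatCond1} or condition~\ref{defRecPatCond2}. In the first case every rule of $C$ is decreasing (Definition~\ref{def:decrule}), so each relevant body literal $L$ satisfies $L[\pi]\prec\head(R)[\pi]$; by the definition of $\prec$ every grounding strictly lowers the call size, so no ground edge is increasing and $I=\emptyset$. In the second case every rule of $C$ is almost never increasing, so $L[\pi]\anlr\head(R)[\pi]$ for each such $L$; by the definition of $\anlr$ only finitely many groundings of $R$ make $|L[\pi]\sigma|>|\head(R)[\pi]\sigma|$, and summing over the finitely many rule/body-literal pairs of $C$ shows $I$ is finite.

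Then I would propagate the bound. Set $M=\max\bigl(|A[\pi]|,\ \max\{\,|A''[\pi]| : (A',A'')\in I\,\}\bigr)$, which is finite. For any $C$-atom $B$ reachable from $A$ along a path $A_0,\ldots,A_k=B$ of $C$-atoms I would show $|A_i[\pi]|\le M$ for every $i$ by inspecting the last increasing edge $A_{j-1}\to A_j$ with $j\le i$: if such an edge exists then $|A_j[\pi]|\le M$ by the choice of $M$ and no edge between $A_j$ and $A_i$ is increasing, so the call size is non-increasing there and $|A_i[\pi]|\le M$; otherwise the call size never rose above $|A_0[\pi]|=|A[\pi]|\le M$. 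In particular $|B[\pi]|\le M$. Finally, because $\pi$ is complete for $C$, the tuple $B[\pi]$ lists \emph{all} arguments of $B$, so $|B[\pi]|$ is the total symbol count of $B$; as $P$ is finite its signature is finite, whence only finitely many ground $C$-atoms have size at most $M$. This bounds the reachable $C$-atoms and proves the lemma.

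I expect the almost never increasing case to be the crux. The delicate points there are extracting finiteness of the increasing-edge set $I$ from the purely semantic relation $\anlr$ (where Theorem~\ref{thm:computing-rels} supplies the needed occurrence-counting handle) and then carrying the call-size bound through paths that may contain cycles -- precisely the ``loop saturation'' situation this bound is meant for. The completeness of $\pi$ over $C$ is indispensable: it makes the controlled quantity $|B[\pi]|$ coincide with the full size of $B$, so that bounding it confines $B$ to a finite set, exactly the phenomenon noted in the footnote to Lemma~\ref{lemma:rec-pat-finite-path}.
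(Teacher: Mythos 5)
Your proof is correct, but it follows a genuinely different route from the paper's. The paper's own proof is two lines: it invokes Lemma~\ref{lemma:rec-pat-finite-path} (every path in $\DG_a(P)$ contains finitely many distinct atoms) and then observes that, since $\pi$ is complete over $C$, the $C$-atoms in bodies of $C$-rules contain no local variables (under $\prec$ or $\anlr$, every variable of $L[\pi]$ must occur in $\head(R)[\pi]$, and completeness makes these all the variables of the atoms), so each ground $C$-atom has only finitely many $C$-atom successors in $\DG_a(P)$; path-finiteness plus finite branching then yield finiteness of the reachable set via an implicit K\"onig-style step. You instead establish a uniform bound: the set $I$ of increasing ground edges inside $C$ is finite, hence every $C$-atom $B$ reachable from $A$ satisfies $|B[\pi]|\leq M$ by the last-increasing-edge argument, and completeness of $\pi$ plus the finiteness of the signature bound the number of such atoms outright. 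This is essentially the $I$/$M$ machinery from the paper's omitted proof of Lemma~\ref{lemma:rec-pat-finite-path}, but applied globally to the whole reachable $C$-region rather than along a single path; what it buys is an explicit quantitative bound and the elimination of both the finite-branching observation and the K\"onig argument, at the cost of re-deriving rather than reusing the path lemma. One point you should state more carefully: $\anlr$ bounds the groundings of the compared vectors $L[\pi]$ and $\head(R)[\pi]$, not of the whole rule $R$, which may carry further variables and thus have infinitely many increasing ground instances; your conclusion that $I$ is finite still holds because, by completeness, a grounding of those two vectors already determines both endpoints of the edge, so the extra variables only reproduce the same edge finitely often counted as a pair of ground atoms.
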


\hide{ 

\begin{proof}
  By Lemmata~\ref{lemma:strict-rec-pat-finite-path} and~\ref{lemma:rec-pat-finite-path}, 
  all paths in $\DG_a(P)$ contain finitely many different atoms.
  Moreover, the $C$-atoms in the rules whose head predicate is in $C$ 
  do not contain local variables because $\pi$ is complete for their predicate symbols and then
  only finitely many different $C$-atoms are adjacent to a $C$-atom in $\DG_a(P)$. 
\end{proof}

} 

\noindent
Clearly $\max_A$ is an upper bound to the number of consecutive ground $C$-atoms occurring in an acyclic annotation. 
The \emph{loop saturation} index of an a-literal $A\alpha$ is $\max_{A}$ minus the length of the longest prefix of $\alpha$ consisting of $C$-atoms only. 
If the loop saturation index of $A\alpha$ is $0$, then every resolvent of $A\alpha$ that contains a $C$-atom is cyclic; if the loop saturation index is $\ell>0$, then all the $C$-atoms in the resolvents of $A\alpha$ have loop saturation index $\ell-1$.

\begin{theorem}[Strong finiteness]
  \label{embedded-termination}
  Let $P$ be a program with a recursion pattern $\pi$. Let $G_0$ be an
  a-goal with $k$ or more positive a-literals. Assume that $P$ and
  $G_0$ are call-safe w.r.t.\ $\pi$.  Then $G_0$ has finitely many
  acyclic embedded a-derivations of degree $k$ from $P$. Moreover,
  they are all finite.
\end{theorem}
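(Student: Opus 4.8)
The plan is to prove the statement by a well-founded induction whose principal measure is attached to the leftmost positive atom of each goal, wrapped inside an outer induction on the degree $k$ that peels off one positive literal at a time through the decomposition lemmata.

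First I would reduce the whole theorem to the case $k=1$. By Decomposition~2 (Lemma~\ref{join-embedded}), every degree-$k$ embedded a-derivation is either an infinite degree-$1$ derivation or the join of a degree-$1$ derivation of $G_0$ (ending in some $G_n$) with a degree-$(k-1)$ derivation of $G_n$. Once the degree-$1$ statement is settled -- finitely many derivations, all finite -- the infinite alternative is excluded, each of the finitely many resulting $G_n$ is call-safe by Lemma~\ref{call-safeness}, and the inductive hypothesis on $k$ applied to $G_n$ yields finitely many finite degree-$(k-1)$ derivations; joining finitely many finite objects preserves both properties. So the crux is the degree-$1$ case: a call-safe a-goal with leftmost positive a-literal $A\alpha$ has finitely many acyclic degree-$1$ derivations, all finite.

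For the degree-$1$ statement I would run a well-founded induction on the lexicographic measure $\mu(A)=\langle\height(A),\rho(A)\rangle$, where, by call-safeness, $A[\pi]$ is ground (so $\rho$ is defined) and $\rho(A)$ is the call size $|A[\pi]|$ when $\pred(A)$ lies in a strongly connected component $C$ satisfying condition~\ref{defRecPatCond1} of Definition~\ref{def:rec-pat}, and the loop saturation index of $A\alpha$ when $C$ satisfies condition~\ref{defRecPatCond2}. By Decomposition~1 (Lemma~\ref{decomposition-1}) a degree-$1$ derivation either fails immediately (at most one, finite) or selects $A$, resolves it with one of the finitely many rules $R$ of $P$ whose head unifies with $A$, and continues as a degree-$m$ derivation of the resolvent, $m$ being the number of positive body literals of $R\theta$. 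The key claim is that every positive body literal $L'\alpha'$ has measure strictly below $\mu(A)$: if $\pred(L')$ lies in a lower component then $\height(L')<\height(A)$; if it lies in $C$ then the height is unchanged but $\rho$ drops -- under condition~\ref{defRecPatCond1}, decreasingness gives $L'[\pi]\prec\head(R)[\pi]$, hence $L'\theta[\pi]\prec A[\pi]$ and a strictly smaller call size; under condition~\ref{defRecPatCond2}, $\pi$ is complete for $C$ and $A$ is prepended to the annotation, so the loop saturation index falls by one, while acyclicity forbids it from dropping below $0$. These bounds are \emph{robust}: $\prec$ is preserved under further instantiation, and $\height$ together with the annotation structure are substitution-invariant, so a body literal keeps measure $<\mu(A)$ even after later steps bind its variables; moreover by Lemma~\ref{ground-answers} successful answers are grounding, so the compared call sizes are honest integers. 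Finiteness of the continuation then follows from a secondary induction on $m$ (again via Decomposition~2): each step resolves a leftmost literal of measure $<\mu(A)$ by the principal inductive hypothesis, leaving $m-1$ literals still of measure $<\mu(A)$, so finitely many finite derivations compose into finitely many finite derivations.

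The main obstacle is the treatment of components satisfying condition~\ref{defRecPatCond2}, where the call size may genuinely grow along recursion and so cannot serve as the descending quantity. Here everything hinges on the loop saturation index being well defined and bounded, which is exactly what Lemma~\ref{cycle-bound} supplies (a ground $C$-atom depends on finitely many $C$-atoms, because $\pi$ is complete on $C$): the index starts at some finite value $\le\max_{A}$, strictly decreases at each same-component resolution, and is kept non-negative by acyclicity, so only finitely many consecutive $C$-atoms can be unfolded before the recursion must leave $C$. Verifying that this descent meshes with the lexicographic combination above -- i.e.\ that a growing call size inside a condition-\ref{defRecPatCond2} component never resurrects a larger measure, since $\height$ is fixed on $C$ and the loop saturation index alone governs that coordinate -- is the delicate bookkeeping that makes the two cases of Definition~\ref{def:rec-pat} fit into a single well-founded argument.
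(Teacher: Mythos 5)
Your proposal is correct and takes essentially the same route as the paper's proof: the same decomposition lemmata (Lemmata~\ref{decomposition-1} and~\ref{join-embedded}), the same three descending quantities (height; call size for decreasing components; loop saturation index for almost never increasing ones, well defined thanks to completeness of $\pi$, Lemma~\ref{cycle-bound}, and the groundness guaranteed by call-safeness and Lemma~\ref{call-safeness}). The only difference is organizational rather than mathematical: you fuse the paper's nested inductions (height, then call size or loop saturation, then degree $k$) into a single lexicographic measure on the leftmost positive literal, which as a side benefit absorbs the paper's separate ``homogeneous''/``non homogeneous'' case split.
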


\begin{proof}
  By induction on the maximum height of the literals in $G_0$. The base case is trivial.
\hide{ 

: if the maximum height is $0$, then all literals in $G_0$ are negative and the unique a-derivation of $G_0$ contains only $G_0$ itself.

} 
Now assume that the theorem holds for all heights $\leq n$; let $A_1\alpha_1,\ldots,A_k\alpha_k$ be the first $k$ positive a-literals of $G_0$, and assume that the maximum height of $A_1,\ldots,A_k$ is $n+1$.

We first prove the theorem for ``homogeneous'' cases where the atoms with maximum height belong to a same strongly connected component $C$ of $DG_p(P)$, that is, the members of $\{A_1,\ldots,A_k\}$ with height $n+1$ are all $C$-atoms.
This case is further divided in two subcases:
\begin{description}
\item[SC1] all rules $R\in C$ are decreasing w.r.t.\ the recursion pattern $\pi$;

\item[SC2] all rules $R\in C$ are almost never increasing w.r.t.\  $\pi$.
\end{description}

\emph{Proof of SC1}. By induction on the maximum call size of the  members of $\{A_1,\ldots,A_k\}$ with height $n+1$. 

\emph{Base case for SC1} (the maximum call size is $0$). By induction on $k$. 
If $k=1$, then for all resolvents $G_1$ of $G_0$ with rule $R$ and mgu $\theta$, consider the positive literals $A'_1\ldots A'_j$ in the body of $R\theta$. Since the call size of $A_1$ is $0$, $R$ is decreasing w.r.t.\ $\pi$, and the call size is non-negative, it follows that the height of $A'_1\ldots A'_j$ must be smaller than $n+1$. By the induction hypothesis relative to height, the embedded a-derivations for $G_1$ of degree $j$ are finite and finitely many. Then the same property holds for the embedded a-derivations for $G_0$ of degree $1$, by Lemma~\ref{decomposition-1}. This completes the proof for $k=1$.

Now assume $k>1$. By Lemma~\ref{join-embedded}, every embedded a-derivations for $G_0$ of degree $k$ is the join of two embedded a-derivations of degree $1$ and $k-1$, respectively.  Then the theorem easily follows from the induction hypothesis for degrees $k'<k$.

\emph{Induction step for SC1} (the maximum call size is $c>0$). The proof is similar to the proof of the base case.  The only difference is that the the positive literals $A'_1\ldots A'_j$ in the body of $R\theta$ may belong to $C$ and have degree $n+1$, however their maximum call size must be smaller than $c$ because $R$ is decreasing w.r.t.\ $\pi$.  Then it suffices to apply the induction hypothesis relative to the maximum call size instead of the one relative to height.
This completes the proof of SC1.

\emph{Proof of SC2}. %
Analogous to the proof of SC1.
In this case the induction is on the maximum loop saturation index $\ell$ of the first positive a-literals $A_1\alpha_1,\ldots,A_k\alpha_k$.
Call safeness, Lemma~\ref{call-safeness}, and the completeness of $\tau$ over $C$ ensure that all $C$-atoms occurring in the derivation are ground, so that the loop saturation index is well-defined. Details are omitted due to space limitations.
\hide{ 

\emph{Base case for SC2} (the maximum loop saturation index is $0$). By induction on $k$.
If $k=1$, then for all resolvents $G_1$ of $G_0$ with rule $R$ and mgu $\theta$, either $G_1$ is cyclic (and the acyclic a-derivation cannot be further extended) or all the positive literals $A'_1\ldots A'_j$ in the body of $R\theta$ have height smaller than $n+1$. By analogy with the base case for SC1, it follows from the induction hypothesis on height and Lemma~\ref{decomposition-1} that the theorem holds for $k=1$.  Similarly, the proof for $k>1$ follows from the induction hypothesis on $k$ and Lemma~\ref{join-embedded}.

\emph{Induction step for SC2} (the maximum loop saturation index is $\ell>0$). The proof is similar to the base case.  The only difference is that the rewriting of $A_1\alpha_1$ may produce literals $A'_i(A_1\cdot\alpha_1)$ with height $n+1$ but loop saturation $\ell-1$.  Then it suffices to replace the application of the induction hypothesis relative to height with an application of the induction hypothesis relative to $\ell$.

} 
This completes the proof of the homogeneous case.

Finally, we are left to prove the theorem for ``non homogeneous'' goals where the atoms with maximum height among $A_1,\ldots,A_k$ may belong to different strongly connected components.
The proof is by induction on $k$; the induction step relies on Lemma~\ref{join-embedded}.
\hide{ 

The proof is by induction on $k$.  If $k=1$, then we have an instance of the previous ``homogeneous'' case, and the theorem is proved. If $k>1$, then by Lemma~\ref{join-embedded}, every embedded a-derivations for $G_0$ of degree $k$ is the join of two embedded a-derivations of degree $1$ and $k-1$, respectively.  Then the theorem easily follows from the induction hypothesis for degrees $k'<k$.

} 
\end{proof}


\begin{corollary}
  \label{strong-termination}
  Let $P$ be a program with a recursion pattern $\pi$ and $G_0$ be an
  a-goal. Assume that $P$ and $G_0$ are call-safe w.r.t.\ $\pi$.  Then
  $G_0$ has finitely many acyclic a-derivations from $P$. Moreover,
  they are all finite.
\end{corollary}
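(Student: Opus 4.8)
The plan is to obtain the corollary as the special case of Theorem~\ref{embedded-termination} in which the degree $k$ is taken to be the \emph{total} number of positive a-literals in $G_0$ and the protected suffix $G''$ is the empty goal. Under this instantiation the notion of embedded a-derivation of degree $k$ specializes to that of an ordinary a-derivation: since $G''=\varepsilon$ we have $G_i=G'_i$ for every goal of the sequence, and the clause requiring $G'_0$ to contain exactly $k$ positive a-literals holds by the choice of $k$. What remains to be checked is that the termination condition built into the embedded notion --- a finite derivation ends either with a goal free of positive a-literals or with a failed goal --- coincides with the ways in which an ordinary acyclic a-derivation stops, namely by success, by failure of the leftmost positive literal to unify with any rule, or, in the acyclic regime, by detection of a cycle in the last goal (which is treated as failure).

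Granting this identification, the maximal acyclic a-derivations of $G_0$ are exactly the acyclic embedded a-derivations of degree $k$ of $G_0$ from $P$, while every other acyclic a-derivation is a proper prefix of one of them. The hypotheses of Theorem~\ref{embedded-termination} transfer verbatim, since $P$ and $G_0$ are call-safe w.r.t.\ $\pi$ and $G_0$ has at least $k$ (in fact exactly $k$) positive a-literals; the theorem therefore guarantees that these embedded derivations are all finite and finitely many. Because each finite derivation has only finitely many prefixes, the whole set of acyclic a-derivations of $G_0$ --- maximal ones together with their prefixes --- is finite and consists of finite sequences. The degenerate situation where $G_0$ has no positive a-literal is immediate, the single a-derivation being $G_0$ itself.

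I expect the only real obstacle to be definitional rather than combinatorial: one must make fully explicit that the bookkeeping of the embedded machinery (the factorization $G_i=G'_iG''$ together with the degree count) genuinely reduces to plain a-derivations once $G''$ is empty, and, in particular, that all three stopping modes of an acyclic a-derivation are subsumed by the embedded notion of \emph{successful or failed}. Once this matching of definitions is in place no quantitative work is left, as the bounds on the length of the derivations and on their number are already delivered by the strong finiteness theorem.
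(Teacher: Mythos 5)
Your proposal takes essentially the same route as the paper: the paper's own proof is a single observation that a-derivations of $G_0$ coincide with embedded a-derivations of degree $k$, where $k$ is the number of positive a-literals in $G_0$ (i.e., the protected suffix $G''$ is empty), after which Theorem~\ref{embedded-termination} yields the result. Your extra bookkeeping --- checking that the embedded stopping condition subsumes the stopping modes of ordinary acyclic a-derivations, and closing the set under prefixes --- is correct and merely makes explicit what the paper's one-sentence proof leaves implicit.
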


\hide{ 

\begin{proof}
  Note that all successful a-derivations are successful embedded a-derivations of degree $k$, where $k$
  is the number of positive a-literals in $G_0$, and viceversa.
\end{proof}

} 

	\subsection{Acyclic supports and stable models}
	\label{A-derivations-vs-P-proofs}

It is well-known that the stable models of a program $P$ are completely characterized by the supports of $P$'s ground atoms.  In our setting, 
a \emph{support} for an a-goal $G$ with answer substitution $\theta$ is a set of negative literals $\{L_1,\ldots,L_n\}$ such that $G$ 
has a successful a-derivation from $\ground(P)$ with answer substitution $\theta$ and last goal $L_1\alpha_1,\ldots,L_n\alpha_n$. 
A support for an atom $A$ is a support for the a-goal $A\varepsilon$.  
The set of (negative) literals occurring in the last a-goal of a successful a-derivation $\Delta$ is called the \emph{support of $\Delta$}.
By \emph{acyclic support} we mean a support generated by an acyclic derivation.
The first result tells that by adopting acyclic a-derivations, only redundant supports can be lost:

\begin{theorem}[Completeness of acyclic derivations w.r.t.\ supports]
  \label{acyclic-enough}
  If $G_0$ has a successful a-derivation $\Delta$ from $P$ with global answer $\theta$, 
  then $G_0$ has a successful acyclic a-derivation $\Delta^a$ from $P$ with global answer $\theta^a$ 
  such that $\theta^a$ is more general than $\theta$ and the support of $\Delta^a$ is more general than a subset of the support of $\Delta$.
\end{theorem}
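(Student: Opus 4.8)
The plan is to prove the statement by well-founded induction on the length of $\Delta$, reducing each non-acyclic derivation to a strictly shorter successful one by \emph{cutting a loop}. If $\Delta$ is already acyclic we are done (take $\Delta^a=\Delta$). Otherwise some a-goal of $\Delta$ is cyclic, i.e.\ it contains a positive a-literal $A\alpha$ with $A$ occurring in $\alpha$. Since annotations accumulate the ancestry of each literal (the resolvent prepends the resolved atom to the annotations of the body literals) and the mgu's instantiate the whole goals, annotations included, this occurrence of $A$ inside $\alpha$ witnesses an \emph{earlier} resolution step $s$ at which the very same atom $A$ (of which the present one is an instance) was selected and resolved. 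Because $\Delta$ is successful, its last a-goal has no positive a-literals, so under leftmost selection the recurring $A$ is itself resolved later in $\Delta$; the segment of $\Delta$ that resolves exactly this recurring occurrence and all its descendants is, by Lemma~\ref{decomposition-1} and Lemma~\ref{join-embedded}, a \emph{successful embedded a-derivation of degree $1$}, say $\delta_A$, whose support is the set of negative a-literals it produces.

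The surgery then replaces, at step $s$, the embedded degree-$1$ derivation that resolved the ancestor $A$ by a suitably lifted variant of $\delta_A$. This is legitimate because ancestor and recurring atom are the same atom up to instantiation, so the rules used along $\delta_A$ apply to the ancestor as well, with more general mgu's. Joining this grafted completion with the derivations of the remaining literals (again via Lemma~\ref{join-embedded}) yields a new successful a-derivation $\Delta'$ of $G_0$. By construction $\Delta'$ omits every step that formed the loop between the ancestor occurrence and the recurring occurrence, hence it is \emph{strictly shorter} than $\Delta$; moreover, having performed strictly fewer and more general unifications, its global answer $\theta'$ is more general than $\theta$, and the negative a-literals surviving the cut form a support that is more general than a subset of the support of $\Delta$ (the leaves produced inside the excised loop are exactly the ones discarded). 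Note that annotations influence only the cyclicity test, not the answer or the support, and that grafting a completion higher up only shortens the recorded ancestries, so no cycle is created by the recomputed annotations.

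Applying the induction hypothesis to the shorter $\Delta'$ gives a successful acyclic a-derivation $\Delta^a$ with global answer more general than $\theta'$ -- hence, by transitivity, more general than $\theta$ -- and support more general than a subset of the support of $\Delta'$; chaining this with the previous inclusion (a subset of a subset is a subset, and generality composes with instantiation) shows that the support of $\Delta^a$ is more general than a subset of the support of $\Delta$, as required. The process terminates because each cut strictly decreases a non-negative integer, the length, and it necessarily ends at an acyclic derivation, since in a successful derivation every cyclic a-goal exhibits a recurring atom that is later resolved and is therefore always cuttable. The main obstacle is the non-ground bookkeeping of the surgery: I must show that grafting $\delta_A$ onto the more general ancestor occurrence, and then replaying the derivations of the sibling and suffix literals under the resulting more general substitution, yields a \emph{well-defined} a-derivation whose global answer and whose surviving negative literals are provably more general than the original ones -- essentially a lifting argument for the cut derivation. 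Everything else (length decrease, termination, and transitivity of the generality and subset relations) is routine.
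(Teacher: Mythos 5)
Your proposal is correct and follows essentially the same route as the paper's (omitted) proof: identify a cyclic goal, graft a lifted variant of the sub-derivation that resolves the recurring atom onto its more general ancestor occurrence, argue via a lifting-style bookkeeping that the resulting shorter successful derivation has a more general answer and a support more general than a subset of the original, and conclude by induction. The only difference is the induction measure (length of $\Delta$ rather than the paper's number of cycles), which is immaterial since the same cut strictly decreases both.
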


\hide{ 

\begin{proof}
  We will prove theorem by induction on the number $cn$ of cycles in $\Delta$.

  \begin{description}
  \item [Base case ($cn=0$):] $\Delta^a=\Delta$.
  \item [Inductive hypothesis ($cn = l$):] Suppose that theorem holds for $cn = l$.
  \item [Inductive step ($cn = l+1$):] Let $\Delta=G_0,...,G_h$ with rules $R_0,...,R_{h-1}$ and
  global answer $\theta=\theta_0\circ ... \circ$ $\theta_{h-1}$. Moreover, let
  \[
  \begin{array}{l}    
    G_j= N_j, L\alpha, Z\\
    G_k= N_k, L \alpha'\cdot L\cdot \alpha, Y, Z\theta_j ... \theta_{k-1}\\
    G_m= N_m, Y\theta_k ... \theta_{m-1}, Z\theta_j ... \theta_{m-1}\\
    G_n= N_n, Z\theta_j ... \theta_{n-1} \\	
    G_h=N_h
  \end{array}
  \]
  where $0\leq j<k<m\leq n\leq h$. 
  Moreover, $N_j$, $N_k$, $N_m$, $N_n,N_h$ are a-goals with only negative a-literals,  
  $L$ is a positive literal, $\alpha$ and $\alpha'$ are annotations, $Y$ and $Z$ are a-goals. 
  Note that $G_k$ is a cyclic a-goal.

  For $k\leq i \leq m$, it holds that $G_i = N_k\theta_k...\theta_{i-1} X'_i Y\theta_k...\theta_{i-1} Z\theta_j ... \theta_{i-1}$
  where $X'_k,...,X'_m$ is a successful a-derivation of $L$ with rules $R_k,...,R_{m-1}$ and mgu's 
  $\theta_k,...,$ $\theta_{m-1}$.

  Analogously, for each $n\leq i \leq h$ it holds that $G_i = N_n\theta_n...\theta_{i-1} X''_i$  
  where $X''_n,...,X''_h$ is a successful a-derivation of $Z\theta_j ... \theta_{n-1}$ with rules $R_n,...,R_{h-1}$ and mgu's 
  $\theta_n,...,\theta_{h-1}$. 
  By properties of a-derivations, there must exist a successful a-derivation $X'''_n,...,$ $X'''_h$ of the more general
  a-goal $Z\theta_k ... \theta_{m-1}$ with rules $R_n,...,R_{h-1}$ where each $X'''_i$ is a more general instance of $X''_i$.

  Then 
    $$\Delta'=G_0,...,G_j, G'_{k+1},...,G'_m,G''_{n+1},...,G''_h$$
  where, for $k+1\leq i \leq m$, 
    $$G'_i = N_j\theta_k...\theta_{i-1} X'_i Z\theta_k ... \theta_{i-1}$$ 
  and, for $n+1\leq i \leq h$, 
    $$G''_i = N_j\theta_k...\theta_{m-1}\theta_n...\theta_{i-1} X'_m\theta_n...\theta_{i-1} X'''_i$$
  is a successful a-derivation of $G_0$ with rules $R_0,...,$ $R_{j-1},$ $R_k,...,$ $R_{m-1},$ $R_n,...,$ $R_{h-1}$ 
  and global answer
  $\theta'=\theta_0\circ ... \circ$ $\theta_{j-1}\circ$ $\theta_k\circ ... \circ$ $\theta_{m-1}\circ$ $\theta_n \circ ... \circ$ $\theta_{h-1}$ .

  Moreover, $G''_h = N_j\theta_k...\theta_{m-1}\theta_n...\theta_{h-1}$ $X'_m\theta_n...\theta_{h-1} $ $X'''_h$ and 
  \begin{itemize}
  \item the set of literals in $G_h = N_n\theta_n...\theta_{h-1} X''_h$ is the support of $\Delta$;
  \item $N_j\theta_k...\theta_{m-1}\theta_n...\theta_{h-1}$ is a more general instance of $N_j\theta_j...\theta_{h-1}$
  that is a subsequence of $N_k\theta_k...\theta_{h-1}$; 
  \item $N_k\theta_k...\theta_{h-1} X'_m\theta_m...\theta_{h-1}$ is a subsequence of $N_n\theta_n...\theta_{h-1}$;
  \item $X'_m\theta_n...\theta_{h-1}$ is a more general instance of $X'_m\theta_m...\theta_{h-1}$; 
  \item $X'''_h$ is a more general instance of $X''_h$.  
  \end{itemize}
  Then, the global anwer $\theta'$ of $\Delta'$ is more general than the global answer $\theta$ of $\Delta$ 
  and $\Delta'$ terminates with a support $G''_h$ that is more general than a subset of the support $G_h$ of $\Delta$.
  Since $\Delta'$ contains $l$ cycles, by inductive hypothesis,
  $G_0$ has a successful acyclic a-derivation $\Delta^a$ from $P$ with global answer $\theta^a$ that 
  is more general than $\theta'$ and the support of $\Delta^a$ is more general than a subset of the support of $\Delta'$.

  We can conclude that $G_0$ has a successful acyclic a-derivation $\Delta^a$ from $P$ with global answer $\theta^a$ that 
  is more general than $\theta$ and the support of $\Delta^a$ is more general than a subset of the support of $\Delta$.
  \end{description}
\end{proof}

} 

A-derivations and the related notion of support are in close correspondence with the P-proofs of \cite{DBLP:conf/iclp/MarekR08} and the corresponding supports.  By exploiting these relationships and the previous lemma, one can easily prove the following characterization of stable models in terms of the supports of acyclic a-derivations.

\hide{ 

Support-derivations provide a new definition of stable models for normal programs
equivalent to Gelfond-Lifschitz one~\cite{GL88}.
\begin{definition}[Marek et al.~\cite{DBLP:conf/iclp/MarekR08}]
  Given a ground normal program $P$, a \emph{$P$-proof scheme} is a sequence 
    $$ S=\langle\langle R_1,A_1\rangle,...,\langle R_n,A_n\rangle,U\rangle $$
  subject to the following conditions:
  \begin{enumerate}
  \item when $n=1$, $\langle\langle R_1,A_1\rangle, U\rangle$ is a $P$-proof scheme if $R_1\in P$, 
  $A_1=\head(R_1)$, $\body^+(R_1)=\emptyset$ and $U=\body^-(R_1)$.
  \item when $\langle\langle R_1,A_1\rangle,...,\langle R_n,A_n\rangle,U\rangle$ is a $P$-proof scheme,
  $R$ is a rule in $P$, $A=\head(R)$ and $\body^+(R)\subseteq \{A_1,...,A_n\}$
  then $\langle\langle R_1,A_1\rangle,...,\langle R_n,A_n\rangle,\langle R,A\rangle,U\cup \body^-(R)\rangle$
  is a $P$-proof scheme.
  \end{enumerate}
  Moreover, if $S=\langle\langle R_1,A_1\rangle,...,\langle R_n,A_n\rangle,U\rangle$
  is a $P$-proof scheme then $A_n$ is the \emph{conclusion} of $S$ and the set $U$ is 
  the \emph{support} of $S$ and it is denoted by $\supp(S)$.
\end{definition}

For any set of atoms $M$, a $P$-proof scheme $S$ is $M$-applicable if $M\cap \supp(S) = \emptyset$. 
We also say that $M$ admits $S$ if $S$ is $M$-applicable.

\begin{proposition}
\label{prop:P-proof-vs-derivation}
  If $\Delta=G_0,...,G_n$ is a successful a-derivation of $G_0$ from a ground normal program $P$ 
  with rules $R_1,...,R_n$, the sequence 
  $$\langle\langle R_n,\head(R_n)\rangle,...,\langle R_1,\head(R_1)\rangle,\bigcup_{j=1}^n\body^-(R_j)\rangle$$
  is a $P$-proof scheme with $\head(R_1)$ as its conclusion.
\end{proposition}
\begin{proof}
  All rules $R_1,...,R_n$ belong to $P$. 
  Moreover, since $\Delta$ is an a-derivation, for all $i=1,...,n-1$, it holds $\body^+(R_i)\subseteq \{\head(R_n),...,\head(R_{i+1})\}$ 
  and since $\Delta$ is successful, $G_n$ does not contain positive a-literals and then $\body^+(R_n)=\emptyset$.  
\end{proof}

For any set $S=\{A_1,...,A_n\}$ of atoms, we denote by $\naf S$ the set of negative literals $\{\naf A_1,...,\naf A_n\}$.
\begin{theorem}
\label{th:P-proof-vs-support}
  Let $P$ be a ground normal program and $M$ be a set of ground atoms.
  For all $P$-proof schemes $S$ with conclusion $A_n$, the set $M$ admits $S$ iff
  there is a $P$-proof scheme $S'$ with conclusion $A_n$ s.t. 
  $\naf\supp(S')$ is a support for $A_n$ w.r.t. $P$ and $M$ admits $S'$. 
\end{theorem}
\begin{proof}
  \begin{description}
  \item [\fbox{$\Leftarrow$}] Let $S=S'$. Theorem trivially holds.
  \item [\fbox{$\Rightarrow$}] Let 
	$$ S=\langle\langle R_1,A_1\rangle,...,\langle R_n,A_n\rangle,U\rangle $$
  be an $M$-applicable $P$-proof scheme with conclusion $A_n$. 
  Let $\Delta=G_0,G_1,...,G_k$ be the maximal sequence defined as follows:
  \begin{itemize}
  \item $G_0=A_n\varepsilon=\head(R_n)\varepsilon$ is an a-goal,  
  \item for all $i=1,...,k$, the a-goal $G_i$ is the annotated resolvent of $G_{i-1}$, $\head(R_j)$ and $R_j$, 
  for some rule $R_j$ in $S$ s.t. $\head(R_j)$ occurs in $G_i$.
  \end{itemize}
  Since $S$ is a $P$-proof scheme and $G_0=\head(R_n)\varepsilon$,
  for any goal $G_i$ containing some positive a-literals there is a rule $R_j$ in $S$
  s.t. $\head(R_j)$ occurs in $G_i$. 

  By properties of $P$-proof schemes and by construction of $\Delta$, it follows that
  $\Delta$ is a successful a-derivation of $G_0$ from $P$.  
  
  By Proposition~\ref{prop:P-proof-vs-derivation}, there is a $P$-proof scheme $S'$ associated to $\Delta$.
  By construction of $\Delta$, all items in $S'$ are in $S$ and $\supp(S')\subseteq\supp(S)$. 
  Then $S'$ is $M$-applicable because $S$ is.
  \end{description}
\end{proof}

\begin{proposition}[Marek et al.~\cite{DBLP:conf/iclp/MarekR08}]
\label{prop:marek-support}
  For every ground normal program $P$ and every set $M$ of atoms, 
  $M$ is a stable model of $P$ iff the following conditions hold.
  \begin{enumerate}
  \item For every $Q\in M$, there is a $P$-proof scheme $S$ with conclusion $Q$ s.t.
  $M$ admits $S$.
  \item For every $Q\notin M$, there is no $P$-proof scheme $S$ with conclusion $Q$ s.t.
  $M$ admits $S$.
  \end{enumerate}
\end{proposition}

} 

\begin{theorem}
\label{th:support-vs-models}
  Let $P$ be a normal program. A set $M$ of ground atoms is a stable
  model of $P$ iff $M$ is the set of all ground atoms that have a
  ground acyclic support in $\ground(P)$ satisfied by $M$.
\end{theorem}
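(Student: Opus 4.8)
The plan is to establish the biconditional by reducing acyclic supports to the ordinary supports (or $P$-proof schemes) whose equivalence with stable models is already known, and then invoking Theorem~\ref{acyclic-enough} to bridge the two notions. Concretely, I would fix a set $M$ of ground atoms and prove the two directions by relating acyclic a-derivations over $\ground(P)$ to the characterization of stable models via supports. The natural tool is the correspondence between successful a-derivations and $P$-proof schemes in the sense of \cite{DBLP:conf/iclp/MarekR08}: a successful a-derivation of a ground atom $A$ yields a $P$-proof scheme with conclusion $A$ whose support is exactly the set of negative literals in the last a-goal, and conversely every $P$-proof scheme can be unfolded into a successful a-derivation with the same (or a smaller) support. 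Under this correspondence, ``$A$ has a ground support satisfied by $M$'' is synonymous with ``$M$ admits a $P$-proof scheme with conclusion $A$,'' so the known fixpoint characterization of stable models in terms of $P$-proof schemes gives the result for \emph{arbitrary} (not necessarily acyclic) supports.

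The key steps, in order, are as follows. First I would record the standard characterization: $M$ is a stable model of $\ground(P)$ iff $M$ is precisely the set of ground atoms $A$ possessing some ground support $S\subseteq \neg\cdot M^{c}$, i.e.\ a support disjoint from $M$ under the usual reading, equivalently a support \emph{satisfied} by $M$ (every negative literal $\naf B$ in $S$ has $B\notin M$). This is just the support-based reformulation of the Gelfond--Lifschitz definition and holds for all ground normal programs. Second, I would translate ``support'' into ``acyclic support.'' The nontrivial inclusion is that if $A$ has \emph{some} ground support satisfied by $M$, then it has a ground \emph{acyclic} support satisfied by $M$. This is exactly where Theorem~\ref{acyclic-enough} enters: given any successful a-derivation $\Delta$ for $A\varepsilon$ from $\ground(P)$ with support $\supp(\Delta)$, that theorem produces a successful acyclic a-derivation $\Delta^a$ whose support is more general than a subset of $\supp(\Delta)$. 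Since we are working over the ground program, ``more general'' collapses to equality of instances, so the support of $\Delta^a$ is a \emph{subset} of $\supp(\Delta)$; and any subset of a support satisfied by $M$ is again satisfied by $M$ (satisfaction of a set of negative literals is inherited by subsets). Hence acyclic supports suffice to witness membership. The reverse direction is immediate, because an acyclic support is in particular a support.

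Assembling these, I would argue both directions of the theorem. For the forward direction, if $M$ is a stable model then by the standard characterization every $A\in M$ has a ground support satisfied by $M$ and no $A\notin M$ does; applying the support-to-acyclic-support reduction to the positive case, every $A\in M$ has an acyclic such support, while the negative case is preserved since acyclic supports are supports. For the converse, if $M$ equals the set of atoms with a ground acyclic support satisfied by $M$, then using again that acyclic supports are supports (membership) and Theorem~\ref{acyclic-enough} (non-membership: if some $A\notin M$ had an ordinary support satisfied by $M$, it would have an acyclic one, contradicting $A\notin M$), we recover exactly the support-based characterization, whence $M$ is stable.

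The main obstacle I anticipate is handling the genericity clause of Theorem~\ref{acyclic-enough} correctly, namely that the acyclic support is only \emph{more general than a subset} of the original support rather than literally a subset. I would neutralize this by emphasizing that the construction runs over $\ground(P)$, where all a-derivations are already ground and answer substitutions are trivial, so ``more general than a subset'' becomes ``is a subset,'' and subsets of $M$-satisfied negative-literal sets remain $M$-satisfied. A secondary, more bookkeeping-heavy point is making the support/$P$-proof-scheme dictionary precise enough that the classical characterization transfers verbatim; I would cite the correspondence sketched in the paper (and in \cite{DBLP:conf/iclp/MarekR08}) rather than re-deriving it, keeping the proof to the essential reduction.
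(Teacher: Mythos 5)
Your proposal is correct and follows essentially the same route as the paper: the paper likewise obtains this theorem from the Marek--Remmel characterization of stable models via $P$-proof schemes (together with the correspondence between proof schemes and successful a-derivations and their supports) combined with Theorem~\ref{acyclic-enough} to pass from arbitrary supports to acyclic ones. Your observation that over $\ground(P)$ the ``more general than a subset'' clause collapses to an actual subset, and that satisfaction of a set of negative literals by $M$ is inherited by subsets, is exactly the bookkeeping needed to make that reduction go through.
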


\hide{ 

\begin{proof}
  It follows from Theorem~\ref{th:P-proof-vs-support} and Proposition~\ref{prop:marek-support}.
\end{proof}

} 

	\section{The class FP2}
	\label{FP2}

We are finally ready to introduce the class of FP2 programs.
If $\pi$ and $\tau$ are two selection index mappings, we say that  $\tau$ \emph{contains} $\pi$ (in symbols,  $\tau\supseteq\pi$) iff, for each predicate symbol $p$, it holds that $\tau_p\supseteq\pi_p$.

\begin{definition}[Call patterns]
\label{def:callPattern}
  A selection index mapping $\tau$ for a normal program $P$ is a
  \emph{call pattern} for $P$ iff (i) $\tau$ contains a recursion
  pattern of $P$, and (ii) for each rule $R\in P$ there exists a
  permutation $L_1, L_2, ..., L_n$ of $\body(R)$ such that \hide{the
  permuted rule} $\head(R)\leftarrow L_1, L_2, ..., L_n$ is call-safe
  w.r.t.\ $\tau$.
\end{definition}
\vspace*{-.7em}
\begin{definition}[$\FP2$]
\label{def:FP2}
  A normal logic program belongs to the class $\FP2$ iff it has a call pattern.
\end{definition}
\vspace*{-.7em}
\begin{example}
  The append program of Example~\ref{ex:append} is in FP2. It is easy to verify that if $\tau_\mathit{append}=\{3\}$ then $\tau$ is not only a recursion pattern (see Ex.~\ref{ex:append}), but also a call pattern.  On the contrary, the recursion pattern yielded by $\tau_\mathit{append}=\{1\}$ is not a call pattern because the variable $L$ in the first rule occurs neither in the selected argument (the first one) nor in the body. However this recursion pattern is contained in two call patterns, defined by $\tau_\mathit{append}=\{1,2\}$ and $\tau_\mathit{append}=\{1,3\}$.
\end{example}
For an example of a cyclic FP2 program with negation see the blocks world program in \cite[Fig.4]{DBLP:journals/ai/Bonatti04}. To make it an FP2 program, uniformly replace $T+1$ with $T$ in the second arguments of predicate $\mathit{ab}$. Then the (unique) selection index that is complete for all predicates is a call pattern for the program.

\hide{
\begin{lemma}
\label{lemma:decrule}
  If $P$ is a $\FP2$ program with a call pattern $\tau$ and $R\in P$ is a rule s.t.
  $\tau$ is not complete for $\pred(\head(R))$ then $R$ is decreasing w.r.t.
  a selection index mappng $\pi\subseteq\tau$.
\end{lemma}
\begin{proof}
  By definition of $\FP2$ programs, there is a selection index mapping $\pi\subseteq \tau$ 
  s.t.each rule in $P$ belonging to a strongly connected component $C$ in $\DG_p(P)$ 
  is either decreasing or almost never increasing w.r.t. $\pi$. By Definition~\ref{def:decrule},
  if no predicate symbol in $\body(R)$ belongs to $C$ then $R$ is trivially decreasing w.r.t.
  $\pi$, otherwise $R$ cannot be almost never increasing w.r.t. $\pi$ because $\pi$ is not complete 
  for $\pred(\head(R))$. Then, $R$ must be decreasing w.r.t. $\pi$.
\end{proof}
}

In general, it may be necessary to use different call patterns for different initial goals, in order to satisfy call safeness. In the above example a goal $\mathit{append}(t_1,t_2,t_3)$ is call safe w.r.t.\ some call pattern iff either $t_3$ is ground or at least two arguments are ground; different situations require different call patterns.

\hide{ 

In general, it may be necessary to use different call patterns for different initial goals. Rule bodies shall be permuted accordingly, in order to satisfy call safeness. This is formalized via the following definition:
Let $P$ be a normal logic program and let $\tau$ be a call pattern for $P$.
We denote by $\perm(P,\tau)$ the set of programs $P'$ such that $P'$ is call-safe w.r.t.\ $\tau$ and $P'$ is obtained from $P$ by permuting rule bodies.
Note that $P$ and all programs in $\perm(P,\tau)$ are equivalent, that is they have the same stable models.
} 

	\subsection{Inference in FP2} 

The ground skeptical and credulous 
consequences of finitary programs can be computed by using a ground ``relevant'' fragment of their ground instantiation \cite{DBLP:journals/ai/Bonatti04}. 
Similarly, we can reason over $\FP2$ programs by answering queries over finite and ground programs called support subprograms.

We start by defining a function  $\ssup(G,P)$ that,
for all call-safe a-goals $G$ and FP2 programs $P$, returns a representative set of supports for $G$ w.r.t.\ $P$.
More precisely, let $\ssup(G,P)$ be the set of all pairs $(\theta,s)$ such that $s$ is an acyclic support of $G$ in $P$ with global answer $\theta$.
\begin{proposition}
\label{th:support-computable}
  Let $P\in\FP2$ be a program with a call pattern $\tau$.  The
  restriction of $\ssup(G,P)$ to all $G$ that are call-safe
  w.r.t.\ $\tau$ is computable.
\end{proposition}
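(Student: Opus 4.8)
The plan is to reduce the computation of $\ssup(G,P)$ to a terminating enumeration of acyclic a-derivations, and then to argue that this enumeration captures exactly the acyclic supports of $G$. Since $\tau$ is a call pattern (Def.~\ref{def:callPattern}), it contains a recursion pattern $\pi$ of $P$, and for each rule of $P$ there is a body permutation making it call-safe w.r.t.\ $\tau$; let $P'$ be the program obtained by applying these permutations. First I would observe that, because $\tau\supseteq\pi$, the positions selected by $\pi$ are a subset of those selected by $\tau$, so every literal's $\pi$-arguments involve only variables already occurring among its $\tau$-arguments; hence call-safeness w.r.t.\ $\tau$ implies call-safeness w.r.t.\ $\pi$. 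Consequently $P'$ and $G$ are call-safe w.r.t.\ the recursion pattern $\pi$, which is what the finiteness machinery of Section~\ref{acyclic-reasoning} requires.

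The algorithm then builds the tree of acyclic a-derivations of $G$ from $P'$: starting from $G$, at each node it selects the leftmost positive a-literal, tries to resolve it against every (standardized-apart) rule of $P'$, discards any cyclic resolvent, and recurses. By Corollary~\ref{strong-termination}, $G$ has only finitely many acyclic a-derivations from $P'$ and all are finite; since the branching at each node is finite (finitely many rules, each contributing at most one mgu up to renaming), this tree is finitely branching with no infinite path, hence finite by K\"onig's lemma, so the search halts. For each successful leaf the algorithm outputs the pair $(\theta,s)$ consisting of the global answer $\theta$ and the support $s$ (the negative a-literals of the last goal). By Lemma~\ref{ground-answers}, $\theta$ is grounding, so every such support is ground; in particular the output set is finite and effectively produced.

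It remains to show that the computed set coincides with $\ssup(G,P)$. For \emph{soundness} I would instantiate each successful acyclic a-derivation of $G$ from $P'$ by its grounding answer $\theta$, replacing every rule by its ground instance; this yields a successful acyclic derivation over $\ground(P')$, and since a support is a \emph{set} of negative literals, insensitive to body order, it is equally an acyclic support of $G$ in $\ground(P)$. For \emph{completeness} I would use the same order-independence to note $\ssup(G,P)=\ssup(G,P')$, and then, given any acyclic support of $G$ in $\ground(P')$, invoke the (standard) lifting lemma for a-derivations to produce an acyclic a-derivation of $G$ from $P'$ with a more general answer and support; acyclicity is preserved upward, since a cyclic lifted goal would force its ground instance to be cyclic. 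Because $P'$ and $G$ are call-safe, this lifted answer is already grounding, so ``more general'' collapses to equality up to renaming, and the support is therefore one of those enumerated.

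The main obstacle is precisely this last reconciliation: the finiteness and grounding guarantees are naturally phrased for non-ground, call-safe derivations from $P'$, whereas $\ssup(G,P)$ is defined through derivations over $\ground(P)$ with the original body orderings. Bridging the gap requires three independent ingredients---order-independence of supports (to pass between $P$ and $P'$), the lifting lemma together with upward preservation of acyclicity (to pass between $\ground(P')$ and $P'$), and Lemma~\ref{ground-answers} (to collapse generality on ground objects)---and the delicate point is that the loop-check annotations are order-sensitive, so one must verify that no acyclic ground support is missed; here Theorem~\ref{acyclic-enough} reassures us that restricting to acyclic derivations discards only redundant supports.
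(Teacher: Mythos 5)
Your overall strategy coincides with the paper's (very terse) proof sketch: by Corollary~\ref{strong-termination} the acyclic a-derivations of $G$ are finite and finitely many, so one enumerates them and reads off the pairs $(\theta,s)$ at the successful leaves. However, the bridge you build to make that corollary applicable contains a genuine error. You claim that, since $\pi\subseteq\tau$, call-safeness w.r.t.\ $\tau$ implies call-safeness w.r.t.\ $\pi$, so that the permuted program $P'$ satisfies the hypothesis of Theorem~\ref{embedded-termination} and Corollary~\ref{strong-termination} (which are stated for call-safeness w.r.t.\ the \emph{recursion pattern} $\pi$). That implication is true for a-goals, where a larger mapping only imposes more binding obligations, but it is false for rules, because of the head condition~(\ref{cond1DefCallPattern}) of Definition~\ref{def:callSafeness}: a variable $X$ may occur in $\head(R)[\tau]$ but neither in $\head(R)[\pi]$ nor anywhere in $\body(R)$, in which case $R$ is call-safe w.r.t.\ $\tau$ yet not w.r.t.\ $\pi$. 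The paper's own append example (after Definition~\ref{def:FP2}) refutes your claim: $\pi_{\mathit{append}}=\{1\}$ is a recursion pattern contained in the call pattern $\tau_{\mathit{append}}=\{1,2\}$, but the fact $\mathit{append}([\,],L,L)$ is not call-safe w.r.t.\ $\pi$, since $L$ occurs neither in the first argument nor in the (empty) body --- and no body permutation can fix this. So, as written, your appeal to the finiteness machinery is unjustified.

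The repair is to run the machinery with $\tau$ itself rather than trying to descend to $\pi$. Lemmata~\ref{call-safeness} and~\ref{ground-answers} are stated for an arbitrary selection index mapping $\mu$, so call-safeness w.r.t.\ $\tau$ is preserved along derivations; and inspecting the proof of Theorem~\ref{embedded-termination}, call-safeness is used only to guarantee that each selected atom $A$ has $A[\tau]$ ground, whence $A[\pi]$ is ground (call sizes are defined), and on the almost-never-increasing components the completeness of $\pi$ makes $A$ itself ground (loop saturation indexes are defined). Thus the strong finiteness theorem holds verbatim when $P$ and $G_0$ are call-safe w.r.t.\ any mapping \emph{containing} a recursion pattern --- exactly the $\FP2$ situation, and indeed the visible fragment of case SC2 already speaks of ``the completeness of $\tau$ over $C$''. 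With that adjustment, your remaining bridging (permutation-invariance of supports, lifting with upward preservation of acyclicity, and Lemma~\ref{ground-answers} to collapse generality on ground answers) is sound in spirit and considerably more explicit than the paper's two-line proof, which silently identifies derivations from $P$ with the derivations from $\ground(P)$ used in the definition of $\ssup$; one residual caveat on your soundness direction is that instantiating an acyclic a-derivation by its grounding answer need not remain acyclic goal-by-goal (distinct annotated atoms can be identified by the substitution), so matching the letter of the definition requires an argument along the lines of Theorem~\ref{acyclic-enough}, as you yourself anticipate in your closing paragraph.
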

\begin{proof} (Sketch)
 By Corollary~\ref{strong-termination}, acyclic a-derivations of $G$ from $P$ 
  are finite and finitely many. 
  Then, it suffices to enumerate all acyclic a-derivations of $G$ from $P$.
\end{proof}

\begin{definition}
\label{def:sups}
  Let $P$ be a program and $Q$ be an atom.
  The \emph{support subprogram} $\sups(P,Q)$ for $Q$ w.r.t.\ $P$ is the set computed by the algorithm \textsc{supportSubProgram}\,$(P,Q)$ below.
  \begin{algorithm}
    \footnotesize{
    {\bf Algorithm} \textsc{supportSubProgram}\,$(P,Q)$
    \label{algsups}
    \begin{algorithmic}[1]
      \STATE $SSUP=\ssup(Q\varepsilon,P)$;
      \STATE $T=\{(Q\theta, G) \mid (\theta,G)\in SSUP \}$;
      \STATE $\bar{T}=\emptyset$;     
      \STATE $S=\emptyset$;      
      \WHILE {$T\neq\emptyset$} 
        \CHOOSE $(A,G)\in T$; 	                    
        \STATE $T=T\setminus \{(A,G)\}$;
        \STATE $\bar{T}=\bar{T}\cup \{(A,G)\}$;
        \STATE $S=S\cup \{A\leftarrow G\}$;
        \FORALL {$\naf B \in G$}
	  \STATE $SSUP=\ssup(B\varepsilon,P)$;
	  \STATE $T=T\cup (\{(B,G') \, \mid (\theta,G')\in SSUP\}\setminus\bar{T})$;
	\ENDFOR                
      \ENDWHILE       
      \RETURN $S$;
    \end{algorithmic}
    }
  \end{algorithm}
\end{definition}
%
For all $\FP2$ programs and suitably instantiated atoms $Q$, the support subprogram $\sups(P,Q)$
is finite, ground, and computable.

\begin{theorem}
\label{th:sups-terminates}
  Let $P\in\FP2$ be a program with a call pattern $\tau$.  For all
  atoms $Q$ such that $Q[\tau]$ is ground, the
  algorithm~\textsc{supportSubProgram}\,$(P,Q)$ terminates and returns
  a ground program.
\end{theorem}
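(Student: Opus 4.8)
The plan is to show termination of the algorithm by establishing two facts: each iteration of the while-loop is effective (the inner calls to $\ssup$ terminate and return finite sets), and the loop executes only finitely many times. For the first fact, the key invariant to maintain is that every atom on which $\ssup$ is invoked has its selected arguments ground. Initially this holds for $Q$ by hypothesis ($Q[\tau]$ is ground). Whenever we recurse on a negative literal $\naf B\in G$, we must argue that $B[\tau]$ is ground: this follows because $G$ is the support (a set of negative literals) produced by a successful acyclic a-derivation of a call-safe goal, and by Lemma~\ref{ground-answers} the global answer is grounding, so every literal in the final goal -- in particular $B$ -- is fully ground, hence so is $B[\tau]$. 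Thus each $B\varepsilon$ is call-safe w.r.t.\ $\tau$ (trivially, since it is ground), and Proposition~\ref{th:support-computable} guarantees that $\ssup(B\varepsilon,P)$ is computable and, by Corollary~\ref{strong-termination}, finite. This simultaneously shows that $S$ is a \emph{ground} program, since every clause $A\leftarrow G$ added to $S$ has a ground head $A$ (an answer instance $Q\theta$ or a ground $B$) and a ground body $G$.

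The harder part is termination of the outer while-loop, which I expect to be the main obstacle. The loop processes pairs $(A,G)$, moving each from $T$ to $\bar T$ and never re-adding a pair already in $\bar T$ (note the $\setminus\bar T$ in line~12). Since $\bar T$ only grows and each pair is processed at most once, it suffices to bound the total number of distinct pairs $(A,G)$ that can ever enter $T$. The plan is to show that the set of ground atoms reachable through this process is finite, which reduces to a finitary-style argument: the support subprogram is in essence the relevant fragment of $\ground(P)$ for $Q$, and the atoms occurring in it are exactly those on which $Q$ depends (through the support relation). I would argue that only finitely many distinct ground atoms $B$ can arise, because each arises as a negative literal in some support of a previously processed atom, and $P$ being an FP2 program -- hence finitary by the recursion-pattern machinery of Section~\ref{sec:rec-analysis} -- bounds the recursion. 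Concretely, one shows the dependency process is finitely branching (each $\ssup$ call yields finitely many supports, each with finitely many negative literals) and of bounded depth (the call size and loop-saturation arguments underlying Theorem~\ref{embedded-termination} prevent infinite descent), so by König's lemma the reachable set is finite.

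The remaining subtlety is that a pair consists of an atom \emph{together with} its support $G$, so even with finitely many reachable atoms one must check there are finitely many associated supports; but this is immediate, since for each ground atom $B$ the set $\ssup(B\varepsilon,P)$ is finite, so each atom contributes finitely many pairs. Combining finitely many reachable atoms with finitely many supports per atom bounds the cardinality of $\bar T$ at termination, and since $|\bar T|$ strictly increases on every iteration while being bounded, the loop terminates. I would present the finiteness of the reachable atom set as a separate lemma (or cite the finitary-program properties from \cite{DBLP:journals/ai/Bonatti04} together with Lemmata~\ref{lemma:rec-pat-finite-path} and~\ref{lemma:rec-pat-no-oddcycle}), since that is where the real content lies; the rest is bookkeeping on the loop invariants.
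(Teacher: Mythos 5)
Your proposal is correct and follows essentially the same route as the paper's proof: the same simultaneous induction that $\ssup$ is only ever invoked on ground (hence vacuously call-safe) atoms via Lemma~\ref{ground-answers}, and the same finiteness argument for the loop, organizing the atoms entering $T$ into a finitely-branching structure (Corollary~\ref{strong-termination}) whose branches are finite (Lemma~\ref{lemma:rec-pat-finite-path}), then concluding by K\"onig-style finiteness plus the finitely-many-supports-per-atom bookkeeping on $\bar T$. The only detail worth making explicit when writing it up is the paper's device of requiring each child atom to differ from its ancestors in the forest, since Lemma~\ref{lemma:rec-pat-finite-path} bounds the number of \emph{distinct} atoms on a path rather than the path length itself.
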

\begin{proof}
(Sketch) It can be proved by simultaneous induction that the contents of $T$ are always ground, the atom $A$ selected at step 6 is always call-safe, and the input goal of $\ssup$ is call-safe therefore $\ssup$ is computable. The induction argument relies on the following observations: (i) the initial a-goal $Q\varepsilon$ is call-safe w.r.t.\ $\tau$ by hypothesis, (ii) by Lemma~\ref{ground-answers}, all the supports returned by $\ssup$ are ground if the input goal is call-safe, and (iii) ground atoms are vacuously call-safe. 
We are left to show that the loop at lines 5-12 terminates. 
Observe that the atoms occurring in $T$ during the computation belong to the following forest: The roots are the finitely many instances $Q\theta$ inserted at step 2; the children of each (ground) node $A$ are atoms $B$ occurring in the acyclic supports of $A$ and different from $A$ and its ancestors. By Corollary~\ref{strong-termination}, this tree is finitely branching; by Theorem~\ref{lemma:rec-pat-finite-path} all the paths are finite. Then the tree must be finite.
It follows that the algorithm cannot produce infinitely many different atoms $B$, and hence after a finite number of steps all the pairs $(B,G')$ at line 12 shall be already contained in $\bar T$ and the while statement terminates.
\end{proof}

\noindent
We are only left to show that  $\sups(P,Q)$ can be used to answer $Q$.
By using the properties of relevant subprograms \cite{DBLP:journals/ai/Bonatti04} and Theorem~\ref{th:support-vs-models}, we can prove that:

\begin{theorem}
\label{th:credskep-sups}
  Let $P\in\FP2$ be a program with a call pattern $\tau$ and let $Q$
  be an atom s.t.\ $Q[\tau]$ is ground. For all grounding
  substitutions $\theta$, $Q\theta$ is a credulous/skeptical
  consequence of $P$ iff it is a credulous/skeptical consequence of
  $\sups(P,Q)$.
\end{theorem}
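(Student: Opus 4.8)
The plan is to establish a tight correspondence between the stable models of $P$ and those of $\sups(P,Q)$. Write $\U^{-}$ for the set of ground atoms occurring in $\sups(P,Q)$, which is finite by Theorem~\ref{th:sups-terminates}. I will show that $M\mapsto M\cap\U^{-}$ maps the stable models of $P$ \emph{onto} those of $\sups(P,Q)$. Since any ground instance $Q\theta$ either occurs in $\U^{-}$ or has no acyclic support, in which case it is false in every stable model of both programs (by Theorems~\ref{acyclic-enough} and~\ref{th:support-vs-models}), the two equivalences read off at once from this surjection. Two structural facts drive the argument. First, by construction of \textsc{supportSubProgram} the set $\U^{-}$ is \emph{support-closed}: whenever an atom $A$ enters $\U^{-}$, the algorithm records \emph{all} its acyclic supports as rules $A\leftarrow G$, and every atom occurring negatively in such a $G$ is again placed in $\U^{-}$; by Theorem~\ref{acyclic-enough} these acyclic supports already exhaust all supports of $A$ up to generality. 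Second, since every rule of $\sups(P,Q)$ has a purely negative body, the acyclic supports of an atom \emph{inside} $\sups(P,Q)$ are exactly the bodies of its rules, so Theorem~\ref{th:support-vs-models} specializes to: $M'$ is a stable model of $\sups(P,Q)$ iff $M'=\{A : A\leftarrow G\in\sups(P,Q)\text{ for some }G\text{ satisfied by }M'\}$.

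For the restriction direction I take an arbitrary stable model $M$ of $P$ and check that $M\cap\U^{-}$ is a stable model of $\sups(P,Q)$. By Theorem~\ref{th:support-vs-models} applied to $P$, for $A\in\U^{-}$ we have $A\in M$ iff $A$ has a ground acyclic support satisfied by $M$; by support-closedness that support is one of the recorded bodies $G$, and as every atom of $G$ lies in $\U^{-}$, ``satisfied by $M$'' coincides with ``satisfied by $M\cap\U^{-}$''. Matching this against the specialization above shows that $M\cap\U^{-}$ obeys precisely the fixpoint condition defining a stable model of $\sups(P,Q)$. In particular $Q\theta\in M \iff Q\theta\in M\cap\U^{-}$.

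The extension direction, that every stable model $M'$ of $\sups(P,Q)$ equals $M\cap\U^{-}$ for some stable model $M$ of $P$, is the crux. The obstruction is that $\U^{-}$ is \emph{not} downward closed at the rule level: an atom $A\in\U^{-}$ typically depends \emph{positively} on intermediate atoms that were resolved away into its supports and so are absent from $\U^{-}$, and the cone of these intermediate atoms may be infinite, so one cannot retreat to a finite relevant subprogram and invoke ordinary rule-level splitting. What saves the day is that $\U^{-}$ is closed at the \emph{support} level, which is precisely the level at which Theorem~\ref{th:support-vs-models} speaks. I would therefore pin the $\U^{-}$-atoms to the values prescribed by $M'$ and complete the remaining atoms by solving the support fixpoint of Theorem~\ref{th:support-vs-models} over the complement; here the relevant-subprogram properties of \cite{DBLP:journals/ai/Bonatti04} furnish the required layering of $\U^{-}$ beneath its positive cone, while odd-cycle-freedom (Lemma~\ref{lemma:rec-pat-no-oddcycle}, which applies because the call pattern $\tau$ contains a recursion pattern), hence order-consistency and Fages' theorem, guarantees that the complementary fixpoint is always solvable. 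The completion leaves the $\U^{-}$-part untouched, because the acyclic supports of $\U^{-}$-atoms never mention atoms outside $\U^{-}$; the resulting $M$ is thus a global support fixpoint, that is, a stable model of $P$, with $M\cap\U^{-}=M'$.

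With both directions in hand, $M\mapsto M\cap\U^{-}$ is a surjection from the stable models of $P$ onto those of $\sups(P,Q)$ that preserves the truth value of every relevant $Q\theta$, and the credulous and skeptical equivalences follow as announced. The step I expect to be genuinely delicate is this extension: cleanly separating the support-closed ``bottom'' $\U^{-}$ from its positive, possibly infinite, dependency cone, and proving that any self-consistent valuation of $\U^{-}$ lifts to a full stable model of $P$. This is exactly where the support-level reading of Theorem~\ref{th:support-vs-models}, the completeness of acyclic supports (Theorem~\ref{acyclic-enough}), and the relevant-subprogram machinery of \cite{DBLP:journals/ai/Bonatti04} have to be combined, in place of a naive rule-level splitting that is simply unavailable when the relevant universe is infinite.
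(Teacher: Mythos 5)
Your overall architecture is sound and is, in substance, the route the paper intends (its proof is omitted, but the stated ingredients are exactly Theorem~\ref{th:support-vs-models} plus the relevant-subprogram properties of \cite{DBLP:journals/ai/Bonatti04}): $\U^-$ is support-closed by construction of \textsc{supportSubProgram}, rules of $\sups(P,Q)$ have purely negative bodies so its stable models are precisely the fixpoints $M'=\{A : A\leftarrow G\in\sups(P,Q),\ G \mbox{ satisfied by } M'\}$, and your restriction direction ($M\mapsto M\cap\U^-$) goes through verbatim. Two small points need stating: for the roots you must check that \emph{every} acyclic support of the ground atom $Q\theta$ arises as a pair $(\theta',s)\in\ssup(Q\varepsilon,P)$ with $Q\theta'=Q\theta$ (this holds because, by Lemma~\ref{ground-answers}, the answers are grounding, so a ground derivation of $(Q\theta)\varepsilon$ lifts to one of $Q\varepsilon$ with answer $\theta$); and Theorem~\ref{acyclic-enough} is not actually needed where you invoke it, since Theorem~\ref{th:support-vs-models} already speaks of acyclic supports.

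The genuine gap is in the step you yourself flag as delicate: you justify the solvability of the complementary fixpoint by ``odd-cycle-freedom, hence order-consistency and Fages' theorem.'' That implication is false for infinite ground programs: the program $\{a_i \leftarrow a_{i+1}\,;\ a_i \leftarrow \naf a_{i+1} \mid i\in\nat\}$ has an acyclic (hence odd-cycle-free) atom dependency graph yet no stable model, so odd-cycle-freedom alone cannot ``guarantee that the complementary fixpoint is always solvable.'' What rescues the extension direction is a finite-recursiveness ingredient available at the \emph{support} level: consider the ground program $P^*=\{A\leftarrow s : s \mbox{ an acyclic support of } A\}$ over the whole Herbrand base, so that $SM(P)=SM(P^*)$ by Theorem~\ref{th:support-vs-models}. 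Every ground atom is vacuously call-safe, so Corollary~\ref{strong-termination} gives each atom finitely many acyclic supports, and Lemma~\ref{lemma:rec-pat-finite-path} bounds the distinct atoms along support chains; hence $P^*$ is finitely recursive. Moreover each body literal $\naf B$ of $P^*$ corresponds to a path in $\DG_a(P)$ with exactly one negative edge, so an odd cycle in $P^*$ would induce an odd cycle in $\DG_a(P)$, contradicting Lemma~\ref{lemma:rec-pat-no-oddcycle}; thus $P^*$ is a finitary, odd-cycle-free program (and this, not odd-cycle-freedom alone, yields order-consistency and consistency of the conditioned remainder outside $\U^-$). Once this is in place, observe that $\sups(P,Q)$ is exactly the relevant subprogram of $P^*$ for the finitely many roots $Q\theta$, and the entire theorem -- including your extension direction -- follows from the relevant-subprogram theorem for finitary programs of \cite{DBLP:journals/ai/Bonatti04}, which is precisely the argument the paper gestures at. With that substitution your proof is correct; without it, the central existence claim rests on an invalid inference.
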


It follows that call-safe queries are computable over FP2 programs.  In general, this property does not hold if call safeness does not hold, as proved by the following theorem that is based on an FP2 encoding of a Turing machine similar to those used in \cite{DBLP:journals/ai/Bonatti04}:

\begin{theorem}
  The problems of deciding whether an $\FP2$ program $P$
  credulously/skeptically entails an existentially quantified goal
  $\exists G$ are both r.e.-complete.
\end{theorem}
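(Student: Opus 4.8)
The plan is to prove both directions of r.e.-completeness. For membership in r.e., I would first observe that credulous and skeptical entailment of an existential goal $\exists G$ can be semi-decided by enumeration. Given an FP2 program $P$, I can effectively enumerate all grounding substitutions $\theta$ for $G$; for each candidate instance $G\theta$ I must check whether $G\theta$ is suitably instantiated so that some call pattern applies (i.e.\ whether there is a call pattern $\tau$ with $G\theta$ call-safe w.r.t.\ $\tau$, or whether $G\theta$ is ground). Whenever this holds, Theorem~\ref{th:credskep-sups} reduces the entailment question for $G\theta$ to a decidable check over the finite ground program $\sups(P,G\theta)$. Dovetailing this enumeration over all $\theta$ gives a procedure that halts and accepts exactly when some witnessing instance exists, which is precisely the semantics of $\exists G$ for the credulous case; the skeptical case is analogous. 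This establishes the upper bound, r.e.\ membership.

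For r.e.-hardness I would reduce from the halting problem (or any fixed r.e.-complete problem) by encoding Turing machine computations as FP2 programs, mirroring the technique already used for finitary programs in \cite{DBLP:journals/ai/Bonatti04}. The idea is to represent a configuration of the machine by a ground term (tape contents, head position, and state), and to write rules that derive the successor configuration from the current one, using a successor function symbol to build an unbounded sequence of time steps. A distinguished predicate, say $\mathit{halt}$, is made derivable exactly when an accepting/halting configuration is reached. The existential query $\exists G$ then asks whether $\mathit{halt}$ holds for \emph{some} instantiation of the time/configuration variable, so that credulous (resp.\ skeptical) entailment of $\exists G$ holds iff the encoded machine halts on its input. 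Since halting is r.e.-complete, so is the entailment problem.

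The main obstacle, and the step requiring the most care, is ensuring that the transition-encoding program actually lies in FP2, i.e.\ that it admits a call pattern. This amounts to exhibiting a recursion pattern (Definition~\ref{def:rec-pat}) together with a body permutation making every rule call-safe (Definition~\ref{def:callSafeness}). Concretely, the recursion on time/configurations must be arranged so that for the recursive predicate the selected arguments almost never increase (or strictly decrease), and no odd-cycles are introduced in the relevant strongly connected component; since the machine is deterministic and the encoding is positive in its recursive core, one can make the recursive SCC satisfy condition~\ref{defRecPatCond2} of Definition~\ref{def:rec-pat} by choosing a complete selection index and verifying the $\anlr$ condition against the successor construction. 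The crucial point is that the \emph{query} must be genuinely existential and \emph{not} call-safe: the whole power of the reduction comes from leaving the time argument unbound, which is exactly the situation Theorem~\ref{th:credskep-sups} does not cover. This contrast is what makes the result informative---call-safe queries are decidable, whereas dropping call safeness restores full r.e.\ expressiveness.

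I would finally note that the two bounds together give r.e.-completeness for both the credulous and the skeptical versions; the skeptical direction of hardness uses the standard complementation-free encoding in which $\mathit{halt}$ is derived in the unique relevant stable model exactly when the machine halts, so that skeptical and credulous entailment of $\exists G$ coincide on the constructed program.
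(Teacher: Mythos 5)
Your overall strategy (enumeration for the upper bound, a Turing-machine encoding for hardness) matches the paper's, but two of your steps do not go through as written. First, the skeptical upper bound: your enumeration is sound for the credulous case, where $\exists G$ is entailed iff \emph{some} ground instance $G\theta$ is a credulous consequence, but for skeptical entailment the quantifiers do not commute -- every stable model must satisfy some instance, possibly a \emph{different} instance per model, so ``$\exists\theta$ such that $G\theta$ is skeptically entailed'' is strictly stronger than skeptical entailment of $\exists G$. A concrete FP2 counterexample: $p(a)\leftarrow \naf p(b)$ and $p(b)\leftarrow \naf p(a)$ (an even negative cycle; the complete selection index gives a recursion pattern via the almost-never-increasing case of Definition~\ref{def:rec-pat}, and the ground rules are trivially call-safe). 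Here $\exists X\, p(X)$ is skeptically entailed, yet neither $p(a)$ nor $p(b)$ is, so your procedure never accepts. Declaring the skeptical case ``analogous'' hides the real work: one needs a compactness-style argument, e.g.\ that $\exists G$ is skeptically entailed iff some \emph{finite disjunction} of ground instances is skeptically entailed over a suitable finite support subprogram, and then one enumerates finite sets of substitutions; this is how the corresponding r.e.\ bound is obtained for finitary programs in \cite{DBLP:journals/ai/Bonatti04}.

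Second, the hardness construction. The paper reuses Bonatti's \emph{bounded}-simulation encoding: a predicate $t(s,L,v,R,T)$ whose rules shuffle symbols between the two tape stacks so that the total norm of the (complete) selected arguments stays constant -- hence almost never increasing, with no negation and so no odd cycles -- and the unbounded tape is recovered by the existential padding goal $\mathit{blank\_list}(R)$ in the (non-call-safe) query. Your forward simulation with a successor time-stamp can in fact be made FP2, and more easily than you expect: the extra $s(T)$ in the head makes every rule, including the one extending the tape with a blank cell, strictly \emph{decreasing}, so condition~\ref{defRecPatCond1} of Definition~\ref{def:rec-pat} applies rather than the $\anlr$ case you invoke. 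But your distinguished $\mathit{halt}$ predicate breaks membership: any recursion pattern for the simulation component must select the time argument (the tape arguments alone have constant norm, and an incomplete mapping cannot satisfy the almost-never-increasing case), so in a rule such as $\mathit{halt}\leftarrow t(s_f,L,V,R,T)$ the variable $T$ occurs in the selected arguments of the body atom without occurring in any earlier positive literal, violating Definition~\ref{def:callSafeness}; no permutation repairs this, so no call pattern exists and the program is not in FP2. The fix is simply to drop the wrapper and pose the existential query directly on the configuration predicate (or to pass the final tape up through the recursion, as the paper's encoding does). So your plan is repairable, but the step you yourself flagged as critical is exactly where the sketch, as written, fails.
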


Moreover, the class of FP2 programs is decidable because the space of call patterns and recursion patterns for every given program $P$ is finite, and a simple generate and test algorithm can be used for FP2 membership checking. Then we get:
\begin{proposition}
 Deciding whether a program $P$ is in $\FP2$ is decidable.
\end{proposition}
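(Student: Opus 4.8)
The plan is a generate-and-test argument based directly on Definition~\ref{def:FP2}: a program $P$ belongs to $\FP2$ exactly when it admits a call pattern. First I would observe that the search space of candidate patterns is finite. For an $n$-ary predicate symbol $p$, an $n$-selection index is simply a subset of $\{1,\ldots,n\}$, so there are at most $2^n$ of them; since $P$ mentions only finitely many predicates, there are finitely many selection index mappings altogether. Hence one can enumerate every candidate call pattern $\tau$ and, for each $\tau$, every sub-mapping $\pi\subseteq\tau$ that might witness the recursion-pattern requirement of Definition~\ref{def:callPattern}(i).

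Next I would argue that, for a fixed $\tau$, testing the two clauses of Definition~\ref{def:callPattern} is effective. For clause (ii) I would, for each rule $R\in P$, run through the finitely many permutations of $\body(R)$ and apply the purely syntactic call-safeness condition of Definition~\ref{def:callSafeness}; this amounts to a finite variable-occurrence check. For clause (i) I would test, for each $\pi\subseteq\tau$, whether $\pi$ is a recursion pattern in the sense of Definition~\ref{def:rec-pat}. This reduces to computing $\SCC(\DG_p(P))$ (a finite graph on the predicate symbols) and then, for each component $C$, deciding whether all its rules are decreasing or all are almost never increasing. The key point is that the comparison relations $\prec$ and $\anlr$ occurring in Definition~\ref{def:decrule} are decidable by Theorem~\ref{thm:computing-rels}: each test $L[\pi]\prec\head(R)[\pi]$ or $L[\pi]\anlr\head(R)[\pi]$ is settled by counting variable occurrences and comparing norms. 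Completeness of $\pi$ over the predicates of $C$ is an immediate syntactic check.

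The remaining obligation in clause (i), and the step I expect to be the main obstacle, is deciding whether a component $C$ contains an odd-cycle, that is, a cycle of $\DG_p(P)$ with an odd number of negative edges. I would reduce this to a standard reachability-with-parity computation on the signed predicate dependency graph: duplicate each vertex into an ``even'' and an ``odd'' copy, route each positive edge within a parity class and each negative edge across classes, and test whether some vertex's even copy reaches its own odd copy. Since $\DG_p(P)$ is finite, this is a terminating graph search, so odd-cyclicity of $C$ is decidable.

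Finally I would assemble the pieces: $P\in\FP2$ iff the finite enumeration above exhibits some $\tau$ satisfying clause (ii) together with some $\pi\subseteq\tau$ witnessing clause (i). Every individual test has been shown effective, and only finitely many of them arise, so the overall procedure halts with a yes/no answer, establishing decidability of $\FP2$ membership.
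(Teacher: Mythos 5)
Your proposal is correct and follows essentially the same route as the paper, which justifies the proposition by observing that the space of call patterns (and contained recursion patterns) for a given $P$ is finite and that a simple generate-and-test algorithm suffices. You merely spell out what the paper leaves implicit---that each individual test is effective, via Theorem~\ref{thm:computing-rels} for $\prec$ and $\anlr$, syntactic checks for call-safeness and completeness, and parity-reachability for odd-cycle detection on the finite graph $\DG_p(P)$---all of which is sound.
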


  	\section{Extending FP2 with odd-cycles} 
	\label{composition}

By Lemma~\ref{lemma:rec-pat-no-oddcycle}, FP2 programs cannot be inconsistent nor express denials (that require odd-cycles). This restriction can be relaxed simply by composing FP2 programs with argument restricted programs \cite{DBLP:conf/iclp/LierlerL09}, that are currently the largest known decidable class of programs with the \emph{persistent CFSP property} and have no restriction on odd-cycles.
\begin{definition}\cite{DBLP:conf/iclp/BaseliceB08}
  A class of programs $\cal C$ has the \emph{computable finite semantics property} (CFSP
  for short) iff (i) for all $P$ in $\cal C$, $P$ has finitely many stable
  models each of which is finite, and (ii) there exists a computable
  function $f$ mapping each member of $\cal C$ onto its set of stable models.
  Moreover, the CFSP property is \emph{persistent} iff $\cal C$ is closed under language extensions (i.e., adding more constants or function symbols to the language of a program $P\in\cal C$ yields another program in $\cal C$).
\end{definition}
The CFSP property abstracts a number of program classes with function symbols: $\omega$-restricted programs~\cite{DBLP:conf/lpnmr/Syrjanen01},  
$\lambda$-restricted programs~\cite{DBLP:conf/lpnmr/GebserST07}, argument restricted programs, and more generally the semidecidable class of
finitely-ground programs~\cite{DBLP:conf/iclp/CalimeriCIL08}.
The persistent CFSP property is important because, under suitable hypotheses, programs with this property can be composed with finitary programs without affecting the decidability of inference \cite{DBLP:conf/iclp/BaseliceB08}. We need a preliminary result:

\hide{
\begin{figure}
\begin{center}
\fbox{\includegraphics[width=4in]{finite-semantics2}}
\end{center}
\end{figure}
 }

\begin{proposition}
\label{th:arg-restr-pcfsp}
  Argument restricted programs have the persistent $CFSP$.
\end{proposition}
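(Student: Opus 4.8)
The statement to prove is:

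\begin{proposition}
Argument restricted programs have the persistent CFSP.
\end{proposition}

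The plan is to establish the two halves of the CFSP definition separately for the class of argument restricted programs, and then verify the persistence clause. For the ordinary CFSP property, I would first recall the defining feature of argument restricted programs \cite{DBLP:conf/iclp/LierlerL09}: there exists an argument ranking function assigning to each predicate argument a natural number such that, for every rule and every variable, the rank of any head-argument position in which the variable occurs is bounded by the maximum rank of a body-argument position containing that variable plus the ``depth increase'' contributed by the surrounding function terms. The key consequence I would extract from this ranking is that the set of terms that can appear in any fixed argument position of any ground atom derivable from the program has bounded term depth. From bounded term depth over a finite signature it follows that only finitely many ground atoms are relevant, hence $\ground(P)$ is effectively finite modulo irrelevant atoms, which yields (i) finitely many stable models, each finite.

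For part (ii), the computability of a function mapping each program onto its set of stable models, I would argue that once the bounded-depth Herbrand base is shown to be finite and its size computable from the argument ranking, one can construct a finite ground program equivalent to $P$ and apply any standard (terminating) stable-model enumeration procedure to it. Since the argument ranking itself is computable — deciding membership in the class and computing the ranks is decidable, as established for argument restricted programs — the overall map from a program to its stable models is computable. This gives the plain CFSP.

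For persistence, I would verify closure under language extensions: adding fresh constants or function symbols to the signature of an argument restricted program $P$ must yield a program that is again argument restricted with the \emph{same} argument ranking. The crucial observation is that the argument ranking condition is a purely syntactic constraint on the rules of $P$ and does not depend on the ambient signature; adding symbols that do not occur in any rule leaves every rule unchanged, so the inequalities defining the ranking remain satisfied. Hence the extended program lies in the class, and the bounded-depth argument guaranteeing finiteness still applies over the enlarged (but still finite, for any fixed extension) signature.

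The main obstacle I expect is making the bounded-depth argument fully rigorous: one must show that the argument ranking genuinely forces a global bound on the depth of terms in every argument position across \emph{all} atoms derivable through arbitrarily deep recursion, rather than just a local, one-step bound. This requires an inductive argument over derivation length (or over the recursive structure of the dependency graph) showing the rank inequality composes so that depths cannot grow unboundedly along recursive cycles. Rather than reprove this from scratch, I would invoke the finiteness-of-grounding result already established for argument restricted programs in \cite{DBLP:conf/iclp/LierlerL09}, and concentrate the argument on deriving the two CFSP clauses and the persistence clause as consequences.
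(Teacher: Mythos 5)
The paper itself omits the proof of this proposition, but the evidently intended argument is exactly the one you give: argument restricted programs are finitely ground \cite{DBLP:conf/iclp/LierlerL09}, finitely ground programs have finitely many finite, computable answer sets \cite{DBLP:conf/iclp/CalimeriCIL08}, and persistence follows because the argument-ranking inequalities constrain only the rules of $P$, never the ambient signature, so the class is closed under adding constants and function symbols. Your proposal is correct and sensibly defers the delicate bounded-depth induction to the published finiteness result rather than reproving it; the one point worth stating explicitly is that the safety of argument restricted rules is what guarantees that freshly added symbols can never occur in any stable model, so the finite-semantics bounds survive the language extension unchanged.
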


The forms of composition studied in \cite{DBLP:conf/iclp/BaseliceB08} are the following. 
Let $\Def(P)$ denote the set of predicates \emph{defined in $P$}, that is, 
the set of all predicate symbols occurring in the head of some rule in $P$. 
Let $\Call(P)$ be the set of predicates \emph{called by $P$}, 
that is, the set of all predicate symbols occurring in the body of some rule in $P$.
Then we say that \emph{$P_1$ depends on $P_2$}, in symbols $P_1 \dep P_2$, if and only if
\[
\Def(P_1) \cap \Def(P_2)  =  \emptyset \,,\ 
\Def(P_1) \cap \Call(P_2)  =  \emptyset \,,\ 
\Call(P_1) \cap \Def(P_2) \neq  \emptyset \,.
\]
Moreover, \emph{$P_1$ and $P_2$ are independent}, in symbols $P_1 \ind P_2$, if and only if
\[
\Def(P_1) \cap \Def(P_2)  =  \emptyset \,,\	
\Def(P_1) \cap \Call(P_2)  =  \emptyset	 \,, 
\Call(P_1) \cap \Def(P_2)  =  \emptyset \,.
\]
Now the techniques of \cite{DBLP:conf/iclp/BaseliceB08}, based on the splitting theorem, can be easily adapted to prove the following result:
\begin{theorem}
For all programs $P$ and $Q$ such that $P$ is in FP2 and $Q$ has the persistent CFSP, if $P \dep Q$ or $P \ind Q$, then both credulous and skeptical consequences from $P\cup Q$ are decidable.
\end{theorem}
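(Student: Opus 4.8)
The plan is to glue together two facilities we already possess: the computability of the finitely many finite stable models of a persistent-CFSP program (which $Q$ supplies) and the decidability of credulous/skeptical call-safe queries over an FP2 program (Theorem~\ref{th:credskep-sups}). The glue is the splitting theorem, used exactly as in \cite{DBLP:conf/iclp/BaseliceB08}. Both cases $P\dep Q$ and $P\ind Q$ are handled uniformly. Let $U$ be the set of ground atoms whose predicate is reachable from $\Def(Q)$ in the predicate dependency graph of $P\cup Q$. I would first check that $U$ is a splitting set with bottom $b_U(P\cup Q)=Q$: since $\Def(P)\cap\Call(Q)=\emptyset$, starting from $\Def(Q)$ one only follows edges of $Q$'s rules and never reaches a predicate of $\Def(P)$, so no $P$-predicate lies in $U$; hence (using also $\Def(P)\cap\Def(Q)=\emptyset$) no rule of $P$ has its head in $U$, while every $Q$-rule has both head and body inside $U$. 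Thus $U$ splits $P\cup Q$ with $Q$ below and $P$ above.

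Next I would compute the set of stable models of $Q$. The delicate point here — and the sole place where \emph{persistence} of CFSP is essential — is that inside $P\cup Q$ the subprogram $Q$ is evaluated over the entire Herbrand universe of $P\cup Q$, which may contain finitely many function and constant symbols contributed by $P$. By persistence, $Q$ taken over this extended (still finite) signature retains CFSP, so its set of stable models $\{M_Q^1,\dots,M_Q^m\}$ is finite, each $M_Q^i$ is finite, and the whole set is computable; Proposition~\ref{th:arg-restr-pcfsp} gives a concrete instance when $Q$ is argument restricted.

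For each $M_Q^i$ I would then form $P_i = P\cup\{B.\mid B\in M_Q^i,\ \pred(B)\in\Call(P)\cap\Def(Q)\}$, i.e.\ $P$ enriched with the finitely many bottom atoms $P$ can actually reference, turned into facts. Because $P$'s bodies mention no $V$-predicate other than those in $\Call(P)\cap\Def(Q)$, the splitting operator $e_U(P,M_Q^i)$ and $P_i$ assign the same truth values to $P$-atoms, so $M_Q^i\cup M_P\in \mathit{SM}(P\cup Q)$ iff $M_P$ is the $P$-part of a stable model of $P_i$. The key verification is that $P_i$ is again an FP2 program witnessed by the \emph{same} call pattern $\tau$ of $P$: the added facts are variable-free, hence trivially call-safe, and have empty bodies, hence vacuously decreasing, and they define only predicates already occurring in $P$. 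Moreover $P$ itself stays FP2 over the extended universe, because the comparisons $\prec$, $\smlr$, $\anlr$ are characterised purely syntactically by variable-occurrence counting (Theorem~\ref{thm:computing-rels}) and are insensitive to adding finitely many symbols. Consequently Theorem~\ref{th:credskep-sups} applies to each $P_i$ for any query atom $A$ with $A[\tau]$ ground, via the finite ground support subprogram $\sups(P_i,A)$ built from $\ssup$.

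Finally I would assemble the answer. By the splitting theorem, $\mathit{SM}(P\cup Q)=\bigcup_i\{M_Q^i\cup M_P\mid M_P\in \mathit{SM}(e_U(P,M_Q^i))\}$, so for a $P$-atom $A$ credulous entailment from $P\cup Q$ holds iff $A$ is a credulous consequence of \emph{some} $P_i$, and skeptical entailment holds iff $A$ is a skeptical consequence of \emph{every} $P_i$; the $\exists/\forall$ bookkeeping also yields the correct (vacuous) answers when $m=0$, i.e.\ when $Q$ is inconsistent. For a $Q$-atom $A$ the question reduces to $A\in M_Q^i$ for some, resp.\ every, $i$, which is immediate once the models of $Q$ are in hand; here one uses that each $P_i$, being FP2, is consistent (as noted at the start of this section, by Lemma~\ref{lemma:rec-pat-no-oddcycle}), so that every $M_Q^i$ really extends to a stable model of $P\cup Q$. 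The case $P\ind Q$ is the degenerate sub-case in which $P$ references no $Q$-defined predicate, so $e_U(P,M_Q^i)$ coincides with $P$, $P_i=P$, and $\mathit{SM}(P\cup Q)$ is the product $\mathit{SM}(P)\times \mathit{SM}(Q)$; the identical bookkeeping decides every query. The main obstacle is not this bookkeeping but the two preservation facts that make the reduction legitimate: that $Q$ keeps CFSP when re-interpreted over the larger Herbrand universe of $P\cup Q$ (which is exactly what persistence buys, and why a non-persistent CFSP class would fail), and that $P$ and each $P_i$ remain FP2 over that universe with an unchanged call pattern. Everything else is a finite combination of facts already established.
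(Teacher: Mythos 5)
Your proof is correct and takes exactly the route the paper indicates: the paper omits the argument and merely says that the splitting-theorem techniques of \cite{DBLP:conf/iclp/BaseliceB08} ``can be easily adapted,'' and your write-up is precisely that adaptation --- splitting at the atoms reachable from $\Def(Q)$, invoking persistence of CFSP to evaluate $Q$ over the enlarged Herbrand universe, checking that each $P_i$ (that is, $P$ plus ground facts from a stable model of $Q$) remains FP2 with the same call pattern, and finishing with Theorem~\ref{th:credskep-sups}. The two preservation facts you isolate (persistence under language extension, and stability of FP2 membership via the syntactic characterization of Theorem~\ref{thm:computing-rels}) are indeed the only nontrivial points, and you handle them, together with the $m=0$ and $P\ind Q$ cases, as intended.
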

In particular, this result shows that it is theoretically possible to add the expressiveness of FP2 programs to argument restricted (actually, all finitely ground) programs.

\hide{ 

	\section{Complexity results}

\begin{remark}
\label{rem:card-in-comp}
If $P$ is a $\FP1$ program with a recursion pattern $\pi$ and an infinite Herbrand domain, 
for all rules $R$ in $P$ with a literal $L$ in their bodies, 
if $L$ and $\head(R)$ occur in the same strongly connected component of $\DG_p(P)$, 
either $|L\sigma[\pi]| \leq |\head(R)\sigma[\pi]|$, for all ground substitutions $\sigma$, or $R$ is ground. 
Indeed, if $\head(R)$ contains some variables (variables in $L$ are also in $\head(R)$) 
there are infinitely many different ground substitutions for
$\head(R)$ and $L$, and for all these ones $|L[\pi]| > |\head(R)[\pi]|$ implies
$|L\sigma[\pi]| > |\head(R)\sigma[\pi]|$ while, by definition of recursion pattern, 
for only finitely many substitutions $\sigma$, it may hold $|L\sigma[\pi]| > |\head(R)\sigma[\pi]|$. 
\end{remark}

\begin{theorem}
  Let $P$ be a $\FP1$ program with a recursion pattern $\pi$, 
  and $Q$ be a ground atom.
  Deciding whether $Q$ is a credulous consequence of $P$ is in $2-NEXPTIME$
  w.r.t. the sizes of $P$ and $Q$.
\end{theorem}
\begin{proof}
  Let 
  \begin{itemize}
  \item $n$ be the number of predicate, function and constant symbols in $P$;
  \item $mc$ be the maximal norm increment from head to body in the rules of $P$.  
  \end{itemize}
  Let $A$ be a ground $C$-atom.  
  By Remark~\ref{rem:card-in-comp}, only   
  $|A[\pi]|$ decreasing $C$-rules and $n^{|A[\pi]|}$ almost never increasing $C$-rules, 
  may be applied in an acyclic path from $A$ in $\DG_a(P)$.
  Since the maximal number of components is $|P|$, in an acyclic a-derivation for $A$ from $P$ at most 
  $|P|*(|P|*mc+|A[\pi]| + n^{|P|*mc + |A[\pi]|})$
  rules may be applied, and then $|P|*(|P|*mc+|A[\pi]| + n^{|P|*mc + |A[\pi]|}) +1$ is the maximal length 
  of an acyclic a-derivation for $A$ from $P$.
  It follows that, the there are at most 
  $\sum_{i=1}^{|P|*(|P|*mc+|A[\pi]| + n^{|P|*mc + |A[\pi]|})} |P|^i$ acyclic a-derivations for $A$ from $P$.

  Since $mc$ is the maximal norm increment from head to body in the rules of $P$
  and the maximal number of components is $|P|$, if $Q$ depends on a literal $L$ 
  then $|L[\pi]|\leq (|P|*mc+|Q[\pi]|)$. 
  Then, $Q$ depends at most on $\sum_{i=1}^{|P|*mc+|Q[\pi]|} n^i$ atoms.
  
  Then, 
  $$ |\sups(P,Q)| \leq  \sum_{i=1}^{|P|*mc+|Q[\pi]|} n^i * \left(\sum_{j=1}^{|P|*(i + n^i)} |P|^j\right) \,.$$   
  Deciding whether $Q$ is a credulous consequence of the ground normal logic program
  $\sups(P,Q)$ is $NP$-complete in the size of $\sups(P,Q)$.
  Then, deciding whether $Q$ is a credulous consequence of $P$
  is in $2-NEXPTIME$ w.r.t. the sizes of $P$ and $Q$. 
  
  Note that, if $|P|$ is much smaller than $n$, 
  deciding whether $Q$ is a credulous consequence of $P$
  is in $NEXPTIME$ w.r.t. $n$.
\end{proof}

} 

\hide{ 

\begin{proof}
Let $\mathcal{M}$ be a deterministic Turing machine with semi-infinite tape and with
$S$ as set of states and $V$ as tape alphabet. An instruction for 
$\mathcal{M}$ is a 5-tuples 
$\langle s,v,v',s',m\rangle \in$ $S\times V\times V \times S\times 
\{\mathsf{left},\mathsf{right}\}$,
where $s$ and $v$ are the current state and symbol respectively, $v'$ is the 
symbol to be written in the current cell of tape, $s'$ is the next state and 
$m$ is the $\mathcal{M}$'s head movement.

Let $t(s,L,v,R)$ be the predicate that encodes a configuration of $\mathcal{M}$ 
where $s$ is the current state, $L$ is the list of symbols (in reverse order) 
on the left of $\mathcal{M}$'s head,
$v$ is the current symbol and $R$ is the list of symbols on 
the right of $\mathcal{M}$'s head ($R$ might have a tail of blank symbols) and let
the following program $P_\mathcal{M}$
(due to Bonatti \cite{DBLP:journals/ai/Bonatti04})
be an encoding of all bounded simulations of $\mathcal{M}$:
\[
\begin{array}{ll}
  R1:\ t(s,L,v,[V|R],T)\leftarrow t(s',[v'|L],V,R,T) & for\ all\ instr.\ 
  \langle s,v,v',s',\mathsf{right} \rangle\\
  R2:\ t(s,[V|L],v,R,T)\leftarrow t(s',L,V,[v'|R],T) & for\ all\ instr.\ 
  \langle s,v,v',s',\mathsf{left} \rangle\\   
  R3:\ t(s,L,v,R,[L,v,R]) & for\ all\ final\ states\ s\\
  R4: p(R,X) \leftarrow blank\_list(R), t(s,[\,],v_0,[v_1,...,v_n|R],X)\\
  R5:\ blank\_list([\,])&\\
  R6:\ blank\_list([b|L])\leftarrow blank\_list(L)&
\end{array}
\]
Note that $P_\mathcal{M}$ is a $\FP2$ program (set $\pi_t=\{1,2,3,4,5\}$,
$\pi_p=\{1,2\}$, $\pi_{blank\_list}=\{1\}$).

As proved in \cite{DBLP:journals/ai/Bonatti04}, 
for any ground substitution $\theta$ the goal 
  $$ G=(blank\_list(R),t(s,[\,],v_0,[v_1,...,v_n|R],X))\theta $$
can be derived from $P_\mathcal{M}$ if and only if 
$\mathcal{M}$ terminates on $\langle v_0,v_1,...,v_n\rangle$
using only $k$ cells, where $k$ is the tape length for the 
encoding of $([\,],v_0,[v_1,...,v_n|R\theta])$, and leaving the tape as described by $X\theta$.
\end{proof}

If we add to $P_\mathcal{M}$ the rule
\[
\begin{array}{ll}
  R7:\ u(R)\leftarrow blank\_list(R), t(s,[\,],v_0,[v_1,...,v_n|R],X), \naf u(R) &\\
\end{array}
\]
then we obtain a finitely recursive (but not $\FP2$) program 
$P'_\mathcal{M}$ that is inconsistent if and only if $\mathcal{M}$
terminates on $\langle v_0,v_1,...,v_n\rangle$.

Note that $P'_\mathcal{M}$ is not $\FP2$ because 
the rule $R7$ generates odd-cycles. 

} 

	\section{Related work}
	\label{sec:related}
	
ASP programs with function symbols are able to encode infinite domains and recursive data structures,
such as lists, trees, XML/HTML documents, time. 
However, some restrictions are needed to keep inference decidable
and, to this end, ASP researchers have recently made several proposals 
\cite{DBLP:conf/lpnmr/Syrjanen01,DBLP:journals/ai/Bonatti04,DBLP:conf/lpar/SimkusE07,DBLP:conf/lpnmr/GebserST07,DBLP:journals/tplp/BaseliceBC09,DBLP:conf/iclp/CalimeriCIL08,DBLP:conf/lpnmr/CalimeriCIL09}. 
We will discuss  finitely-ground and FDNC programs, as they include all the other 
classes mentioned above.

\emph{Finitely-ground programs} are DLP programs with function symbols introduced in~\cite{DBLP:conf/iclp/CalimeriCIL08}. 
If we compare this class with $\FP2$ programs, we note that:
\begin{itemize}
\item Ground and nonground queries are computable for finitely-ground programs; call-safe 
queries are computable for $\FP2$ programs while, in general, nonground queries are r.e.-complete.
\item The answer sets of finitely-ground programs are computable because 
their semantics is finite. Infinite stable models are ruled out.
On the contrary, $\FP2$ programs may have infinite and infinitely many answer sets.
\item Finitely-ground programs are safe, while $\FP2$ programs admit unsafe rules.
\item Odd-cycles may occur in finitely-ground programs but not in $\FP2$ programs.
The latter can be extended with odd-cyclic predicates through composition with CFSP programs as shown in Section~\ref{composition}.
\item Deciding whether a program is finitely-ground is semidecidable, while the class of $\FP2$ programs 
is decidable.
\item Finitely-ground programs are disjunctive, while $\FP2$ is currently restricted to normal programs. 
\end{itemize}
Finitely-ground and $\FP2$ programs are not comparable due to the different recursion modes that they admit
and that make finitely-ground programs suitable for a bottom-up evaluation and $\FP2$ programs suitable 
for a top-down evaluation.  
Figures~\ref{fig:list-prg}, \ref{fig:sat-prg} and~\ref{fig:qbf-prg} and Example~\ref{ex:append} illustrate some programs that are $\FP2$ but not finitely-ground.
\begin{figure}
\caption{List processing}
\label{fig:list-prg}
\vspace{-0.15in}
\footnotesize{
\[
\begin{array}[t]{l}
  member(X,[X|Y]).\\
  member(X,[Y|Z]) \leftarrow member(X,Z).
\end{array}
\begin{array}{l}
reverse(L,R) \leftarrow reverse(L,[\,],R) . \\
  reverse([\,],R,R) . \\
  reverse([X|X_s],A,R) \leftarrow reverse(X_s,[X|A],R) .
\end{array}
\]
}
\end{figure}

\begin{figure}
\vspace{-0.2in}
\caption{SAT problem}
\label{fig:sat-prg}
\vspace{-0.15in}
\footnotesize{
\[
\begin{array}{ll}
  s(and(X,Y)) \leftarrow s(X), s(Y) . \qquad & s(not(X)) \leftarrow \naf s(X) . \\
  s(or(X,Y)) \leftarrow s(X) . & s(A) \leftarrow member(A,[p,q,r]), \naf ns(A) . \\
  s(or(X,Y)) \leftarrow s(Y) . & ns(A) \leftarrow member(A,[p,q,r]), \naf s(A) .
\end{array}
\]
}
\end{figure}

\begin{figure}
\caption{Satisfiability check for Quantified Boolean Formulas}
\label{fig:qbf-prg}
\footnotesize{
\[
\begin{array}{l}
  qbf(A,I) \leftarrow atomic(A), curr\_value(A/t,I) . \\
  qbf(or(F,G),I) \leftarrow qbf(F,I) . \\
  qbf(or(F,G),I) \leftarrow qbf(G,I) . \\
  qbf(and(F,G),I) \leftarrow qbf(F,I), qbf(G,I) . \\
  qbf(not(F),I) \leftarrow \naf qbf(F,I) . \\
  qbf(exists(X,F),I) \leftarrow qbf(F,[X/t | I]) . \\
  qbf(exists(X,F),I) \leftarrow qbf(F,[X/f | I]) . \\
  qbf(forall(X,F),I) \leftarrow qbf(F,[X/t | I]), qbf(F,[X/f | I]) . \\
  curr\_value(B,[B|L]) . \\
  curr\_value(X/V,[Y/W|L]) \leftarrow \naf X=Y, curr\_value(X/V,L) .
\end{array}
\]
}
\end{figure}

\hide{ 
$\FP2$ vs. Finitely Ground:\\
\begin{tabular}{l|l|l}
       & $\FP2$ & Finitely Ground\\
\hline
member & SI ($\pi$: $member \rightarrow \{2\}$) & NO (non e` safe)\\
append & SI ($\pi$: $append \rightarrow \{1\}$) & NO (non e` safe) \\
reverse & SI ($\pi$: $reverse \rightarrow \{1\}$) & NO (i fatti non sono ground) \\
SAT & SI ($\pi$: $s \rightarrow \{1\}, ns \rightarrow \{1\}, member \rightarrow \{2\}$) & NO (non e` safe) \\
QBF & SI ($\pi$: $qbf \rightarrow \{1\}, atomic \rightarrow \{1\}, curr_value \rightarrow \{2\}, = \rightarrow \{1\}$) & NO (non e` safe)
\end{tabular}
}

\emph{FDNC programs} \cite{DBLP:conf/lpar/SimkusE07}
achieve inference decidability by exploiting a tree-model property, by analogy with decidable fragments of
first-order logic such as description logics and the guarded fragment.
The tree-model property derives from syntactic restrictions on predicate arity and
on the occurrences of function symbols (modelled around the
skolemization of guarded formulae).  
FDNC programs can be applied to encode ontologies expressed in description
logics, and are suitable to model a wide class of planning problems.
Summarizing:
\begin{itemize}
\item Both FDNC and $\FP2$ programs may have infinite and infinitely many answer sets. 
\item Unlike $\FP2$ programs, the answer sets of FDNC programs can be finitely represented. 
\item Ground and nonground queries over FDNC programs are always computable; 
only call-safe queries are computable for $\FP2$ programs.
\item FDNC programs are safe, while $\FP2$ programs admit unsafe rules.
\item Odd-cycles may occur in FDNC programs but not in $\FP2$ programs. 
\item FDNC programs are disjunctive.
\end{itemize}
Therefore, FDNC and  $\FP2$ programs are incomparable. The 
programs in Figures~\ref{fig:list-prg}, \ref{fig:sat-prg} and~\ref{fig:qbf-prg} and in Example~\ref{ex:append} are examples of
$\FP2$ programs that are not FDNC.

	\section{Summary and conclusions}
	\label{sec:conclusions}

We have introduced FP2, a decidable class of well-behaved normal programs whose properties are orthogonal to those of the other decidable classes of ASP programs with function symbols.

Inference is decidable, too. We have shown a method based on a partial evaluation of the program w.r.t.\ a query $Q$ (algorithm \textsc{supportSubProgram}) that produces a ground program $\sups(P,Q)$ that can be fed to any ASP reasoner in order to answer $Q$. The query $Q$ needs not be ground: it can be call-safe, and it is not hard to see that the method can produce answer substitutions by unifying $Q$ with the stable models of $\sups(P,Q)$.

Note that currently this mixed top-down/ASP solving method is not intended to be an efficient implementation; it is only a proof method for decidability results. In future work the potential of top-down computations as an implementation technique should be evaluated and compared with the magic-set approach adopted in \cite{DBLP:conf/iclp/CalimeriCIL08}.

The norm-based definition of FP2 programs is actually a simplification of the (approximate) static analysis method for recognizing $U$-bounded finitary programs described in \cite{BoLPNMR01,DBLP:journals/ai/Bonatti04}. The binding propagation analysis of the old recognizer is more powerful, and we are planning to improve FP2 to cover more programs accepted with the old method.
Further interesting issues for future work comprise: a precise complexity analysis of FP2 membership checking and inference; support for disjunctive programs; more general forms of composition with persistently CFSP programs; integration with FDNC programs for more general support to odd-cycles.


\bibliography{baselice-bonatti}
\bibliographystyle{acmtrans}

\end{document}